\newtheorem{definition}{Definition}
\newtheorem{theorem}{Theorem}
\begin{document}
\bibliographystyle{IEEEtran}
\makeatother

% \title{Truthful Auction for Long-Term Client Selection for Federated Learning with Non-IID Data in IoV}
\title{Long-Term Client Selection for Federated Learning with Non-IID Data: A Truthful Auction Approach}
\author{Jinghong Tan, ~Zhian Liu, ~Kun Guo,~\IEEEmembership{Member,~IEEE}, ~Mingxiong Zhao,~\IEEEmembership{Member,~IEEE}
\thanks{This work was supported in part by the National Natural Science Foundation of China under Grants 61801418, 62301222 and 62361056, in part by the open research fund of National Mobile Communications Research Laboratory, Southeast University, under Grant No. 2023D04, in part by Applied Basic Research Foundation of Yunnan Province under Grants 202301AT070198, 202201AT070203 and 202301AT070422, in part by Yunnan Provincial Department of Education Science Research Foundation 2023J0025, in part by the Opening Foundation of Yunnan Key Laboratory of Smart City in Cyberspace Security (No. 202105AG070010-ZN-10), in part by the Innovation Foundation of Engineering Research Center of Integration and Application of Digital Learning Technology, Ministry of Education, China (No. 1431007), and in part by the Foundation of Yunnan Key Laboratory of Service Computing (No. YNSC24106).} 
\thanks{Jinghong Tan, Zhian Liu, and Mingxiong~Zhao are with the National Pilot School of Software, Yunnan University, Kunming 650500, China, with the Engineering Research Center of Integration and Application of Digital Learning Technology, Ministry of Education, Beijing 100039, China, with the Engineering Research Center of Cyberspace, Ministry of Education, Kunming 650504, China, and also with Yunnan Key Laboratory of Service Computing, Yunnan University of Finance and Economics, Kunming 650221, China (E-mails: phoilsa\_liu@mail.ynu.edu.cn, jinghong\_tansutd@hotmail.com, jimmyzmx@gmail.com, mx\_zhao@ynu.edu.cn).}
\thanks{
K. Guo is with the School of Communications and Electronics Engineering, East China Normal University, Shanghai 200241, China, and also with the National Mobile Communications Research Laboratory, Southeast University, Nanjing 210096, China. (email: kguo@cee.ecnu.edu.cn)}
\thanks{\emph{Corresponding author: Mingxiong Zhao (e-mail:~jimmyzmx@gmail.com)}.}
}

\maketitle
\begin{abstract}
Federated learning (FL) provides a decentralized framework that enables universal model training through collaborative efforts on mobile nodes, such as smart vehicles in the Internet of Vehicles (IoV). Each smart vehicle acts as a mobile client, contributing to the process without uploading local data. This method leverages non-independent and identically distributed (non-IID) training data from different vehicles, influenced by various driving patterns and environmental conditions, which can significantly impact model convergence and accuracy. Although client selection can be a feasible solution for non-IID issues, it faces challenges related to selection metrics. Traditional metrics evaluate client data quality independently per round and require client selection after all clients complete local training, leading to resource wastage from unused training results. In the IoV context, where vehicles have limited connectivity and computational resources, information asymmetry in client selection risks clients submitting false information, potentially making the selection ineffective. To tackle these challenges, we propose a novel Long-term Client-Selection Federated Learning based on Truthful Auction (LCSFLA). This scheme maximizes social welfare with consideration of long-term data quality using a new assessment mechanism and energy costs, and the advised auction mechanism with a deposit requirement incentivizes client participation and ensures information truthfulness. We theoretically prove the incentive compatibility and individual rationality of the advised incentive mechanism. Experimental results on various datasets, including those from IoV scenarios, demonstrate its effectiveness in mitigating performance degradation caused by non-IID data.
\end{abstract}
\begin{IEEEkeywords}
Federated Learning, Truthful Auction, Client Selection, Data Heterogeneity, Long-term Assessment, Internet of Vehicles.
\end{IEEEkeywords}
\vspace{-0.25em}
\section{Introduction}
The progress in computing and communication capacities \cite{r1}, along with the widespread adoption of smart devices, has facilitated the integration of machine learning into various mobile applications \cite{r2}. Privacy concerns have driven the rise of Federated Learning (FL) \cite{zhu2019multi}, a decentralized approach allowing mobile clients (MCs) to send local model updates to a central server (CS) without sharing raw data.
FL is widely used in the Internet of Things, Internet of Vehicles (IoV), and mobile edge computing \cite{wang2019edge,8771220,10102331,sun2024}.
In the IoV, smart vehicles generate vast amounts of data that can enhance machine learning models for safety, traffic management, and autonomous driving \cite{alalwany2024security}. 
Each smart vehicle participates in FL as the mobile client in the IoV. 
The application of FL in the IoV ensures that sensitive information, such as precise locations and driving behaviors \cite{korba2024}, is not exposed during the training process.
% In the context of IoV, vehicles equipped with advanced sensors and communication modules generate vast amounts of data. This data, ranging from driving patterns to environmental conditions, can be harnessed to train machine learning models that improve vehicle safety, traffic management, and autonomous driving capabilities. However, the inherent privacy concerns necessitate the use of FL to ensure that sensitive information, such as precise locations and driving behaviors, is not exposed during the training process.

% Nevertheless, the distribution of training data heavily relies on individual usage patterns. For example, the nature of the data from different vehicles arises from varied driving patterns, geographical locations, and environmental conditions, leading to the presence of data heterogeneity that is not independent and identically distributed (non-IID). Therefore, local data distributions might not faithfully mirror the overarching global data distribution. This divergence can subsequently engender biases during the model training process \cite{mcmahan2017communication}, ultimately culminating in a diminution of the training model's accuracy.
% For non-IID data, the difference between the average local model parameters and the actual global model parameters is much larger than independent and identically distributed (IID) data \cite{zhao2018federated}. Moreover, such divergence accumulates continuously during the communication iterations.

Nevertheless, the distribution of training data depends on individual usage patterns. 
 {For example, data from different vehicles varies due to differences in driving habits, locations, and environmental conditions, leading to the presence of data heterogeneity that is not independent and identically distributed (non-IID). This means local data may not represent the global data distribution, which can introduce bias and reduce model accuracy \cite{mcmahan2017communication}. For non-IID data, the gap between local and global model parameters is larger than in IID data, and this divergence builds up over communication rounds \cite{zhao2018federated}.}
\subsection{Motivation}
 {Prior research has proposed data sharing and data augmentation methods to address this issue from a data-oriented perspective \cite{zhu2021federated}.} However, sharing partial data poses a significant challenge in terms of privacy protection, which is a critical concern in the context of  {FL}, especially within IoV scenarios where the stakes of data breaches can be high.  {Smart vehicles, being constantly on the move and interacting with dynamic environments, generate highly sensitive and varied data that require robust privacy-preserving mechanisms \cite{alalwany2024security}. This motivates the exploration of alternative approaches that can address data heterogeneity while safeguarding privacy in IoV and other FL applications.
Recent efforts have tried to address the non-IID issue by clustering MCs based on their data distribution. This cluster information is used to select MCs for the next training round, creating a dataset with IID characteristics, without requiring local data to be uploaded \cite{lu2023auction, tian2022wscc}. However, in practice, data distributions can change, making implementation more complex. When data distributions differ significantly, clustering becomes difficult due to the lack of common traits, leading to the failure of clustering schemes in handling non-IID issues.
Metric-based client selection methods can address these limitations, but they face two challenges: }1) designing appropriate metrics, and 2) ensuring truthfulness in the reported metrics.

To address the first challenge, it is critical to design a metric that not only captures the long-term influence of client selection schemes but also is readily achievable within a reasonable timeframe. However, metrics proposed in previous works are evaluated independently round by round \cite{deng2021auction, zhang2021client, zhang2023dpp}, which provides only a one-shot optimal client selection but disregards the long-term accumulated influence of the selected clients. This flaw means that these approaches lie in their necessity to address the non-IID problem within a single communication round. 
Additionally, metrics proposed in other works \cite{cho2022towards, luping2019cmfl, cao2022birds, marnissi2024client} prevent the execution of client selection before local training begins. These metrics rely on information obtained from local training results, such as local accuracy or local model performance. As a result, this reliance on local training results leads to an ineffective utilization of computational resources. 
This negative influence is further exacerbated when local clients possess either a substantial volume of data or encounter poor communication channels, resulting in extended delays in acquiring local training results by the CS.

% Specifically, the metric must be obtained before the commencement of local training, as client selection ought to be grounded in the metric's evaluation preceding the initiation of the local training.
The second challenge lies in the client selection scheme's effectiveness in solving the non-IID issues, which heavily depends on the information collected from the MCs \cite{huang2022stochastic}. There exists an information asymmetry problem between the CS and MCs in realistic scenarios like IoV \cite{alalwany2024security}, where the CS may not have access to the true information of the MCs.
 {In resource-constrained wireless networks, if a poor channel MC declares much higher data quality than the real one, the CS might allocate superfluous bandwidth to this MC to benefit from its declared high data quality, resulting in insufficient bandwidth for truthful bidding MCs\cite{10570525}.
In such cases, the selection scheme becomes meaningless for the entire FL system, and may even be harmful \cite{liu2021privacy}.}
Common ways to incentivize MCs to submit truthful bids include game theory, contract theory, and auction. Using game theory for incentives, such as the Stackelberg game \cite{khan2020federated}, maximizes the benefits for one party rather than benefiting everyone.  {Additionally, contract theory \cite{li2022contract} is challenging to solve the problem of incomplete contract\cite{hart1988incomplete}.} Therefore, the auction is adopted to realize the incentives for MCs to behave honestly. 
\subsection{Our Approach and Contributions}
To address the non-IID problem, we try to capture the long-term impact of client selection and achieve a faster and more timely selection scheme.  {At the same time, ensuring the truthfulness of the metric under information asymmetry is key to achieving the effectiveness of the client selection scheme to address the non-IID problem.}

For this purpose, we propose a scheme called Long-Term Client-Selection Federated Learning based on Truthful Auction (LCSFLA). 
 {This approach focuses on long-term data balance to speed up FL convergence. The goal is to balance the amount of training data across different categories using the local data distribution of MCs in multiple communication rounds, without needing local training results.
Guided by this principle and to assess the long-term influence of the selected MCs on the global model, we introduce a metric, called ``Data Category Discrepancy" (DCD), which evaluates the difference in data size between each data category and a specified reference category. 
The reference category is the highest cumulative training data volume obtained up to the current training iteration.
Based on this, reducing the DCD across categories helps achieve data balance.
Therefore, to further evaluate the individual MCs’ contribution to reducing the DCD across categories, i.e., its data quality, we propose a novel evaluation metric, called ``Unit data quality" (UDQ), based on the DCD and local data distribution.
This parameter can identify MCs that effectively achieve data balance, ultimately helping us to address non-IID issues.}

 {Furthermore, the truthfulness of the evaluated contribution is critical in achieving data balance because if the information submitted by the MC is false, the client selection scheme won't be effective.}
Thus, the calculation of  {UDQ} must be trusted. 
To this end, we have devised an incentive mechanism scheme based on the Vickrey-Clarke-Groves (VCG) auction \cite{rVCG}. 
 {This mechanism aligns the interests of individual MCs with the overall FL system and encourages MCs to submit honest information necessary for evaluating truthful contributions. 
Besides the truthfulness guarantee, we introduce a deposit to avoid premature MC opt-out before the CS gets their local training results. Moreover, a theoretical analysis of incentive compatibility (IC) and individual rationality (IR) was provided to ensure the scheme's robustness.}
Based on this, we propose a social welfare maximization problem that aims to select the MCs effectively data balance while balancing the sum of data quality and actual energy costs of selected MCs. This is achieved through the joint optimization of bandwidth allocation, local iterations, and client selection for FL in the wireless network.

The contributions of this paper are summarized as follows
\begin{itemize}
    \item 
    We introduce a novel long-term client selection scheme that tackles the challenge of non-IID data in FL. This scheme leverages designed metrics to address the cumulative negative effects from a fresh perspective of data balance, ultimately accelerating FL model convergence.
     {The core of this scheme lies in a novel metric UDQ, which assesses the long-term contribution of MCs in achieving data balance.}
    \item 
     {To select the appropriate clients for the current round, along with the number of local iterations and bandwidth allocation, an optimization problem was formulated.} This optimization problem considers the constraints of communication resources and aims to achieve a balance between MC contributions and energy costs by maximizing social welfare, thereby addressing the non-IID issue with minimal energy costs.
    \item 
     {To ensure that MCs upload truthful information, allowing for accurate data quality assessment and optimization of social welfare, an incentive mechanism based on the VCG auction was designed, in which MCs submit an upfront deposit to prevent opt-out while the CS commits to a reward at the end of the communication round.} This design ensures that both IC and IR conditions are met.
    % Additionally, the deposit serves a dual purpose: preventing unwarranted exits from the training phase by MCs and guaranteeing a faster convergence rate for the global model.
    \item 
    Finally, our extensive simulations show that LCSFLA significantly speeds up model convergence. Compared to the baselines on different datasets, LCSFLA achieves: 1) Up to 2\%-61\% higher accuracy 2) final target accuracy with only 20\%-75\% of the communication rounds, and 3) target accuracy with only 32\%-87\% of the energy cost. 
    % LCSFLA achieves comparable or even better accuracy than other schemes but with fewer communication rounds and less energy cost.
\end{itemize}

The rest of this paper is organized as follows. Section II introduces related work in this field. Section III makes an illustration of our system model. In Section IV, we present the design of the auction mechanism. Section V shows our numerical simulation results, and Section VI concludes.

\emph{Notations:} Scalars, column vectors, matrices, and sets are denoted by unbold letters, lower-case bold letters, uppercase bold letters, and calligraphy letters, respectively, e.g., $a$, $\bm{a}$, $\bm{A}$ and $\mathcal{A}$. The notation $\bm{A}^T$ denotes the transpose of matrix $\bm{A}$, while $\bm{a} \in \mathbf{R}^{n \times 1}$ signifies a collection of $n$-dimensional real vectors. The expectation of random variable $x$ is represented by $\mathbb{E}[x]$.

\section{Related work}
Our research focuses on two key areas: non-IID issues and incentive mechanisms. We'll delve into related work in these areas next.
\subsection{Non-IID Issue}
The non-IID problem has been studied since the inception of FL. For example, the paper \cite{mcmahan2017communication} shows that FedAvg suffers a 37\% loss of accuracy on the CIFAR-10 dataset in non-IID scenarios. In the paper \cite{zhao2018federated}, simulation results even suggest that the FedAvg algorithm becomes very sensitive to the distribution of mobile user data and may fail to converge on strong non-IID data, especially when using deep neural networks. To address this issue, the authors propose the concept of data sharing in\cite{zhao2018federated}. Experimental results show that the test accuracy of the model can be improved by about 30\% on the CIFAR10 dataset with only 5\% globally shared data from each MC. However, downloading a portion of the shared dataset to each mobile user for model training violates the requirement of privacy-preserving learning, which is the fundamental motivation of FL.
To avoid privacy leakage, some works have grouped MCs with similar data distributions \cite{briggs2020federated}. Then, the models were trained in carefully selected groups\cite{tian2022wscc}. However, such clustering strategies may fail in scenarios with complex data distributions and need to cluster again when the data distribution of the local dataset varies.
To solve these drawbacks, the strategies of metric-based client selection become a greater solution for non-IID problems in  {FL}. 
Such as a dynamic evaluation model is proposed \cite{deng2021auction}. Through each round of independent calculation based on indicators like data size, data distribution, and error labeling ratio, the CS can obtain the data quality of each MC, and select the MC with higher data quality. However, this scheme is only selected independently in each communication round, without considering the long-term influence of the selected MC.
To capture the long-term influence, in some works \cite{zhang2021client,cho2022towards, Dynamic2021guo}, the CS calculated the parameter gap between the local model and the global model in each communication round. Then the CS selected the MCs with a lower parameter gap \cite{zhang2021client} or chosen MCs with significant parameter gap to participate in aggregation \cite{cho2022towards}. A smaller parameter gap indicates that the local model is closer to the global model. 
To simplify the calculation of similarity, some work using a selection strategy based on the importance of the gradient norms\cite{marnissi2024client} or the local loss of the MC was utilized as a replacement for the gradient norm \cite{cao2022birds}.

However, these schemes implement client selection after all candidate MCs obtain their local train result. This is because metrics calculation relies on local training outcomes for MCs, such as local accuracy or local model parameters. Therefore, these schemes\cite{zhang2021client,cho2022towards, Dynamic2021guo, marnissi2024client} cause massive unused local training, leading to the reduction in utilization efficiency of local computation resources.

Thus, we use the perspective of data balance to guide client selection to capture the long-term influence of the selected MC meanwhile no need for the result of local training. 
% In addition, some problems caused by dishonesty will be found, which means several selfish users have a great negative influence on the performance of the model \cite{liu2021privacy}. To enable the MC to upload real information, we need to design an incentive mechanism when we select MCs and allocate resources. 

\subsection{Incentive Mechanism Design}
In the model training of  {FL}, the MCs, typically consume their resources of computing and communication resources, for local training. This prevents self-interested MCs from contributing their resources to FL unless provide correlated rewards\cite{tu2022incentive}.
Moreover, with information asymmetry between the CS and MCs, MCs will want to get inordinate rewards by submitting no-real information, which has a great negative influence on the performance of the model \cite{liu2021privacy}.
To avoid the above issue, it is necessary to design an incentive mechanism to encourage honest behavior. Moreover, suppose it is non-trivial to recover the mapping between the information submitted by MCs and a metric that measures the contribution of MCs, the motivation to submit false information is greatly reduced.

Shapley Value (SV) has been explored in FL incentives to assess the data quality of MCs to their local datasets\cite{wang2020principled}. While SV offers a fair way to allocate rewards based on marginal utility, it's computationally expensive. Calculating MC contributions requires running many global training rounds, consuming significant resources. 
Therefore, some approaches \cite{kang2019incentive} rely on local training results to assess contribution with much less computation. However, MCs seeking higher rewards from the CS can easily manipulate local accuracy.
Research suggests \cite{lu2023auction,deng2021auction} using local data distribution as an evaluation criterion to address the issue of local accuracy manipulation. For example, the local data sample concentration is factored into service cost calculations for achieving Nash equilibrium \cite{lu2023auction}, or the data labels and full data distribution are utilized as the input of reinforcement learning for client selection \cite{deng2021auction}. 
However, if CS only uses the data distribution as the evaluation criterion, MCs can still infer the evaluation principle of CS through multiple interactions with CS. MCs will know that larger local data scales and more singular data concentrations can be considered as higher data quality and can generate more profits.
Thus, by manipulating their data distribution, MCs could cheat the CS to obtain higher rewards. 
To address the above issues, we use the unit data quality, which is based on the DCD and data distribution, as the basis of assessment. Since MCs know neither others' distribution nor the MCs selected in the previous communication round, they cannot predict the DCD in the current communication round. Therefore, it is difficult to gain more profit from manipulating existing information. This removes the motivation for manipulation and ensures the truthfulness of the information reported to the CS.

In addition, various incentive mechanisms were another perspective to encourage MCs to upload real information.
Early works are partial to maximizing the utility of a single party \cite{weng2019deepchain,khan2020federated,kang2019incentive,8422684}. 
These approaches employed reputation-based mechanisms \cite{weng2019deepchain} for encouraging honest participation and Stackelberg game-based incentives \cite{khan2020federated} to maximize the utility of leader (CS). 
Some studies have used contract theory \cite{nisan2007algorithmic} to maximize the utility of other roles \cite{kang2019incentive, 8422684}.
For example, the local training accuracy is used as the basis for contract design \cite{kang2019incentive}, and authors have established the incentive mechanism maximizing the utility of the data holder based on the contract theory. Another study \cite{8422684}, using contract theory, formulated a negotiation process between task publishers and fog nodes that optimizes task publisher utility.
However, focusing solely on maximizing one party's benefit can lead to inequities, discouraging participation from others.
Recognizing these limitations, recent research has shifted towards total social welfare optimization to encourage the participation of both parties. This can be realized by Auction-based solutions \cite{he2019truthful,nisan2007algorithmic,wu2022sustainable}.
Authors in \cite{he2019truthful} use a primitive-double greedy auction mechanism and apply Myerson theorem \cite{nisan2007algorithmic} to ensure incentive compatibility and individual rationality of the mechanism.
However, computing the optimal virtual reward function under Myerson mechanisms is NP-complete \cite{nisan2007algorithmic}.
To address it, some works use the VCG mechanism \cite{rVCG} as an auction mechanism aimed at maximizing total utility \cite{wu2022sustainable}. 
The payment of each MC can be calculated by CS in polynomial time in the VCG mechanism.

Thus, we designed an incentive mechanism that utilizes a payment flow inspired by the VCG auction payment scheme. This design ensures the uniformity of individual and collective benefits. In other words, for each MC, uploading truthful information becomes the best course of action, as it avoids harming its own interests while maximizing social welfare.

% Although a variant of SV is even proposed for FL based on the order of data sources during training \cite{wang2020principled} to reduce the amount of computation, the method still requires the completion of at least one global training round.
% To capture the long-term influence of selected MCs and to avoid the need for local training results, we propose a long-term data quality evaluation scheme based on data balance. This scheme uses DCD to evaluate how an MC's local data contributes to achieving data balance for FL. 

\section{System Model}\label{System Model}
\subsection{FL Execution Flow}
\begin{figure*}[ht]
    % \vspace{-0.25cm}
    \centering
    \includegraphics[width=0.9\textwidth]{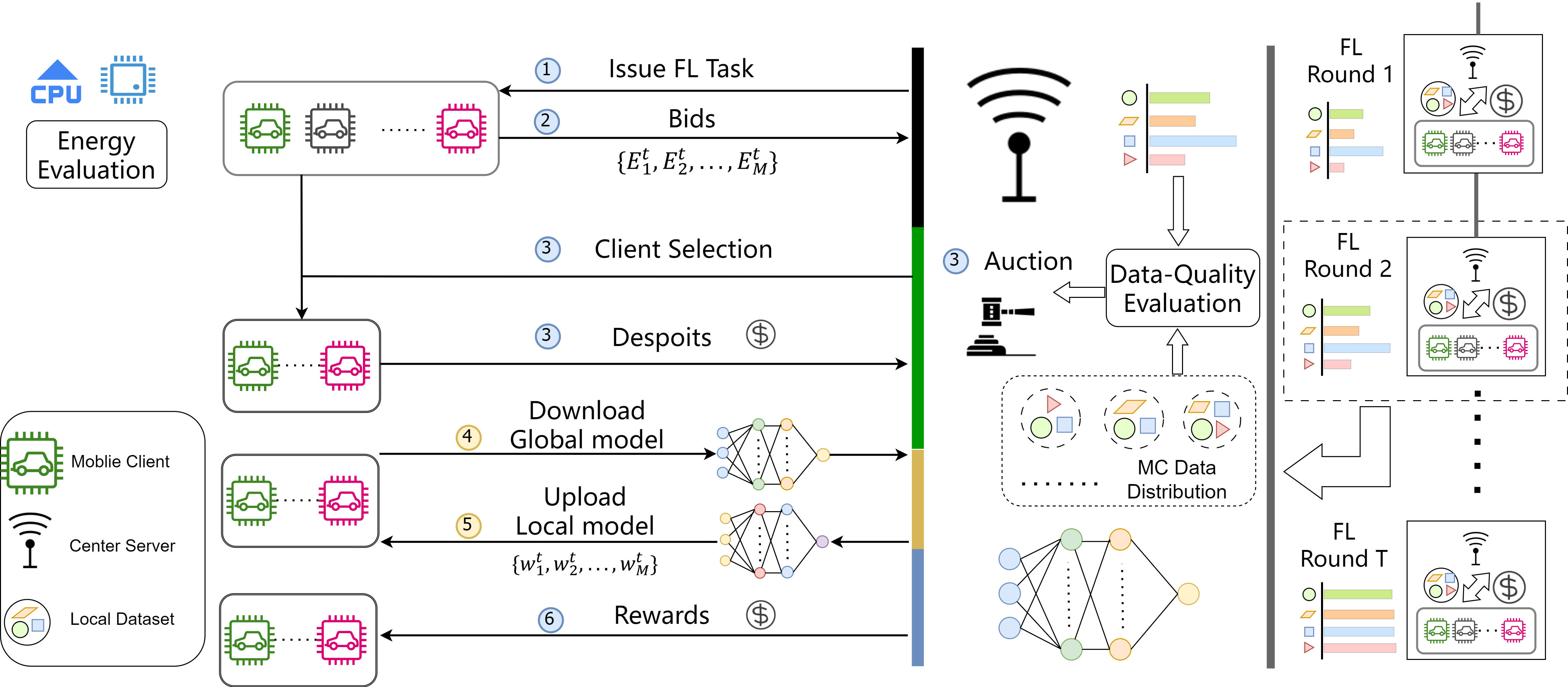}
    \caption{The figure illustrates the entire workflow of LCSFLA,  from task publication to MC receipt of rewards. Firstly, the explanation for the third step shown in the figure, which is also our main focus, is provided. In this step, the CS combines the current data size after training, represented by the bar chart above the evaluation module, with the local data distribution of MCs, represented by the dashed circle within the geometric figure below the evaluation module, to conduct a data quality evaluation for each MC. This result of assessment becomes an essential consideration in the client selection process.}
    \label{Fl-flow}
\end{figure*}
This paper considers a typical FL service market in the IoV scenarios, where a central server coordinates with smart vehicles interested in participating in FL, i.e., mobile clients represented by $\mathcal{M} = \{1, 2, \cdots, M\}$. Each MC possesses a local dataset $\mathcal{D}_m = \{\bm{X}_m, \bm{Y}_m\}, \forall m\in\mathcal{M}$ with a size of $D_m$, where $\bm{X}_m = [\bm{x}_{m, 1}, \cdots, \ \bm{x}_{m, D_m}]^T, \forall m\in \mathcal{M}$ and $\bm{Y}_m = [\bm{y}_{m, 1},\ \cdots, \ \bm{y}_{m, D_m}]^T, \forall m\in \mathcal{M}$ represent the data points and their corresponding labels, respectively. 
The CS leverages MCs to train a shared global model by aggregating their local models. This aggregation occurs in a distributed manner. The CS assigns learning tasks to MCs and incentivizes their participation with rewards. Selected MCs contribute computing power and local data to complete the assigned tasks. The system's workflow can be summarized in several interconnected steps, as illustrated in Fig. \ref{Fl-flow}.

% During each communication round, the federated learning platform initiates a new task, and the central server (CS) invites mobile clients (MCs) to participate.

% Interested MCs submit bids specifying:

%     Estimated computational cost for one local training iteration.
%     Channel coefficients (used for communication efficiency).
%     Proposed transmission rates for all MCs.

% % Additionally, first-time participating MCs submit their local data distribution for evaluation. This allows the CS to assess data quality (e.g., data distribution size for each category in the MNIST dataset, similar to [duan2020self]).**

% Based on this information, the CS employs the DCGain metric (described later) to evaluate each MC's data quality. The CS then selects a set of MCs that maximize social welfare as the winners.

% To ensure economic fairness and smooth operation, the CS collects a deposit from each winner. This deposit depends on several factors, including:

%     Number of chosen local training iterations.
%     CS-estimated data quality.
%     Bid information submitted by the MC.

% Finally, the CS distributes the current global model (denoted by ωt) to the selected MCs for local training.

In each communication round, the FL platform issues FL tasks, and the CS invites MCs to participate in the task training. 
After accepting the invitation, the candidate MCs submit bids regarding their computation consumption for one local iteration, estimated channel coefficients, and prescribed transmission rates for all MCs.
Additionally, suppose an MC is participating in a learning task for the first time. In that case, they need to submit the local data distribution to the CS for data quality evaluation \footnote{If data distribution evolves with time, our proposed framework still works by requiring MC to submit a data distribution vector in each communication round. For example, assuming our experimental environment is the MNIST dataset, the MC will submit a vector with the size of each data category, as done in \cite{deng2021auction}.} 
Based on the information submitted by the MCs, the CS evaluates the data quality of each MC according to the  {UDQ} described in the next subsection. 
Then, the CS goes through the auction market to select a set of winning MCs while collecting the appropriate deposit from the selected MCs.
If MC $m$ wins FL task in communication round $t$, ${q}_{m}^t=1$. Otherwise, ${q}_{m}^t=0$.  {In other words, each MC can only be selected once in communication round $t$.}
Besides, the CS can select up to $N$ MC, i.e.,
\begin{equation}
    \label{task restrain 2}
    \sum_{m=1}^{M}{q}_{m}^t = N.
\end{equation}
Subsequently, the CS distributed the current global model $\bm{\omega}^t$ to the selected MCs.
These MCs train their local model starting with the current global model on its dataset $\mathcal{D}_m$. 
This process can be represented as follows
\begin{equation}
    \label{local_FL}
    F_m(\bm{\omega}^t)=\frac{1}{D_m}\sum_{i=1}^{D_m}{f(\bm{\omega}^t, \bm{X}_{m}, \bm{Y}_{m, i})},
\end{equation}
where $f(\bm{\omega}^t, {x}_{m, i}, {y}_{m, i})$ is the loss of model  $\bm{\omega}^t$ in the data sample $(x_{m, i},\ y_{m, i})$. The gradient is then computed sequentially on each batch, and the local model of MC $m$ is updated toward minimizing the loss function, in which the stochastic gradient descent method was used to update the model with local loss as 
\begin{equation}
    \label{local-updata}
    \bm{\omega}^{t}_{m, l}=\bm{\omega}^{t}_{m, l-1}-\eta\frac{\partial F_m(\bm{\omega}^t_{m, l-1})}{\partial\bm{\omega}^{t}_{m,l-1}},l=1\dots,l_m^t,
\end{equation}
where $\bm{\omega}^t_{m, l}$ is the updated local model of MC $m$ in the local iterations $l$, the parameter $\eta$ is the local learning rate and $l_m^t$ is the the number of local iterations for MC $m$. In the next step, each MC submits its updated local model parameter $\omega^t_m$ to CS for model aggregation. 
According to \cite{wang2020tackling}, the larger the local iteration is, the higher the local accuracy is, and the more important it will be to train the global model.
% Due to discrepancies in local data sizes and heterogeneous computational capabilities among different MCs, discrepancies arise in the specified number of local iterations for the timely uploading of locally trained models to the server. 
% To comprehensively account for MC contributions and address the significant differences in the number of local iterations. 
Thus, we introduced the consideration of client-specific local iteration numbers during model weight aggregation to improve model performance, namely 
\begin{equation}
    \label{glabel_updata}
    \bm{\omega}^{t+1}=\sum_{m=1}^{M}{\frac{\tau_m^t}{\tau^t}\bm{\omega}^t_m},
\end{equation}
where $\tau^t=\sum_{m=1}^{M}\tau_m^t$, and $\tau_m^t = {{q}_m^{t}}^T {l}_m^t D_{m}$ are the aggregation weights, with ${q}_m^t$ as the winner indicator parameter for MC $m$ at the communication round $t$ and ${l}_m^t$ as the numbers of local iterations MC $m$. The number of selected MCs during a communication round is denoted as $N$. 
% For ease of calculation, the size of the range of selectable iterations is also set to $N$. 
% Validation has confirmed that incorporating iteration counts leads to improved model performance\cite{li2020federated, wang2020tackling}. 
% Because the larger $l_m^t$ is, the higher local accuracy is, and it deserves a correspondingly larger weight during aggregation.
Finally, the CS distributes rewards to the MCs based on the number of local iterations they perform. The entire FL process operates as a cohesive and interconnected workflow to facilitate collaborative model training while maintaining privacy across distributed devices.

\subsection{Evaluation of MC Data Quality}
Throughout the training process, in order to tackle the issues caused by statistical data heterogeneity among  MCs, the CS needs to carefully select a set of MCs based on their local data distribution and the current training status. At the same time, the long-term accumulated influence of the selected MCs should be considered. To achieve this end, we design a novel metric, called unit data quality (UQD), which metric evaluates the long-term data quality of each MC by considering the data statistics of the MCs and the DCD up to the current communication round.
This makes it possible to perform client selection before local training begins, effectively utilizing computational resources.
% Finally, to ensure the effectiveness of aggregation within inadequately trained local models and mitigate differences in the performance of global models on different data distributions, the historical training rounds of MC and the number of local iterations will also be taken into account in the evaluation mechanism.
\subsubsection{Data Category Discrepancy}

Due to the MCs' local data heterogeneity, the cumulative discrepancy in the volume of trained data across distinct data categories will gradually vary during training.
To capture this variation resulting from the long-term accumulated influence of the selected MCs, we need to quantify the difference in the data amount learned during previous communication rounds between the dominant category \footnote{In this context, the term ``the dominant category" denotes that the data amount of this category is the largest up to the current communication round. Conversely, the `scarcest category' refers to the category with minimal data amount. \label{cate_diff}} and the rest of the categories.
% Additionally, data balance is directly related to the relative size of the data, and the relative size is more important to us than the absolute size. 
To this end, we first need to calculate the absolute value of the cumulative training data volume. Based on this, we then compute the relative value reflecting the discrepancy across various categories.
Hence, we firstly need to define MC $m$'s data distribution vector $\bm{d}_m=[d_{1, m}, \ d_{2, m}, \dots, \ d_{Z, m}]^T \in \mathbb{Z}^{Z\times 1}$, where $Z$ is the amount of data category and $d_{z, m}$ is the data size of category $z$ of MC $m$. 
Based on the above information, the data amount of a specific category learned up to communication round $t$ can be expressed as the sum of data within the particular category of all participating MCs across from communication round $1$ to $t$.
The aggregated vector of the amount of data learned for all categories is denoted as $\bm{g}^t=[g_{1}^t, \ g_{2}^t, \ \dots, \ g_{Z}^t]^T \in \mathbb{Z}^{Z\times 1}$, where 
\begin{equation}
\label{cate_size}
g_{z}^{t}=\sum_{m\in \mathcal{M}}{{{q}_m^{t}} {l}_m^td_{z,m}}+g_z^{t-1},
\end{equation}
where $g_z^{t-1}$ represents the amount of learned data for the category $z$ up to the communication round $t-1$. 
Based on the absolute value calculated from \eqref{cate_size}, next, we need to calculate the relative value of cumulative training data volume across various categories, i.e., the data category difference between the amount of data for each category and the dominant category.
% To quantify the cumulative variation in the volume of learned data across distinct data categories and the long-term accumulated influence of the selected MCs, 
The largest $g_z^t$ is chosen as the dominant category and denoted as $g^t$, which is used to calculate the DCD vector $\bm{o}^t = [o_1^t, \ o_2^t, \ \dots, \ o_Z^t]^T\in\mathbb{Z}^{Z\times 1}$, where
\begin{equation}
\label{diff}
{o}_z^t = g^t-g_z^t.
\end{equation}

\subsubsection{Data Quality of MC}
\begin{figure*}
% \captionsetup[subfigure]{font=scriptsize}
\centering
    \subfloat[]{\includegraphics[width=0.23\textwidth]{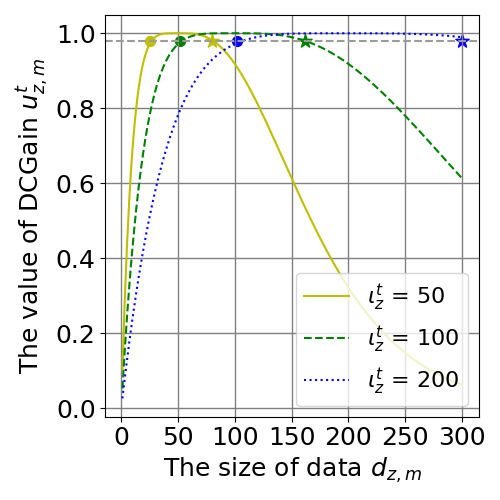}%
    \label{DCGain-1}}
% \hfil
%     \subfloat[]{\includegraphics[width=0.23\textwidth]{}%
%     \label{DCGain-5}}
\hfil
    \subfloat[]{\includegraphics[width=0.23\textwidth]{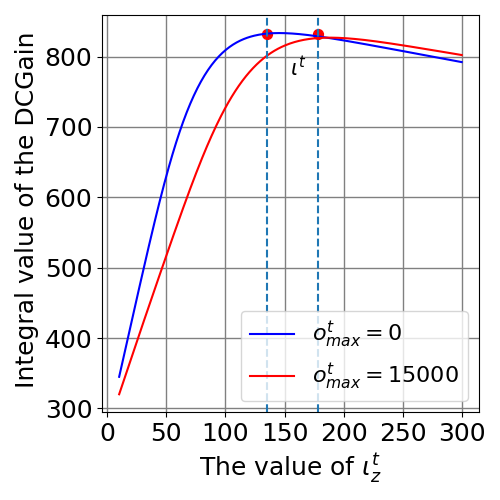}%
    \label{DCGain-3}}
\hfil
    \subfloat[]{\includegraphics[width=0.23\textwidth]{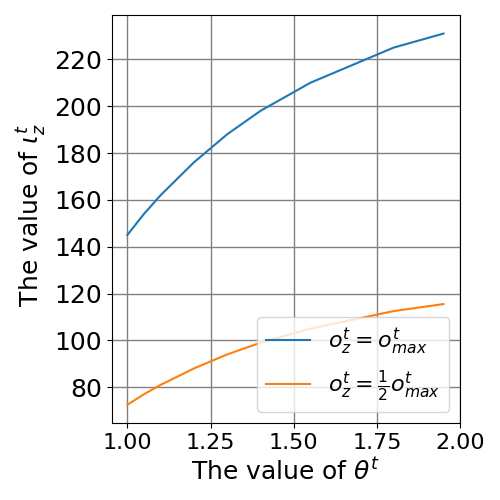}%
\label{DCGain-4}}
\caption{The sub-fig (a) shows UDQ curves to four different experiment setups, and it can be shown that the distance of the highest point in the data preference range from the origin becomes greater in the x-axis as $\iota_{z}^t$ increases. In addition, in sub-fig (a), we can know that the width of the data preference range will be broader as $\iota_z^t$ increases. Dots and asterisks indicate the starting point and endpoint of the data preference range, respectively. 
The sub-fig (b) displays the changing trend of the integrated value ${E}(u_{z, m}^t)$ as $\iota_{ z}^t$ increases, and the value of $\iota_{z}^t$ from which we can obtain the maximum of the integrated value is defined as $\iota^t$. Lastly, in sub-fig (c), we observe that the $\iota^t$ increases when $\theta^t$ increases for the scarcest category. Furthermore, the value of $\iota_z^t$ also increases as $\theta^t$ increases when ${o_z^t} = \frac{1}{2}{o_\text{max}^t}$.
In conclusion, the value of $\iota_z^t$ will increase as $\theta^t$ increases when the relative scarcity of any category is constant, i.e., ${o_z^t}/{o_\text{max}^t}$ remains constant.
It is important to note that $\theta^t$ increases solely because $o_\text{max}^t$ increases.
The parameters in the three sub-figures are set as follows: $o_\text{max}^t =1000$, $ d_\text{avg} =200$, $\alpha=1$. Other parameters are set as described in Section V-A.}
    \label{fig: DCGain}
\end{figure*}

It is known that the smaller the DCDs for all categories are, the better balanced the accumulated trained data sets across various categories are. 
The less trained a specific category is, the more important it becomes to replenish data in that category in the next round. The degree of data deficiency in a category determines the extent of data supplementation required. In a simple scenario, where the MC possesses a single category of data and a fixed size of local sets, selecting $N$ MCs with the largest data amount of the scarcest category\textsuperscript{\ref{cate_diff}} would achieve a balanced data distribution for that category. Subsequently, in the following communication rounds, the MCs with the largest data amount of the sub-scarce category are selected to achieve a data balance of all categories gradually. However, in the realistic scenario, where both the local data categories and size of MCs are non-IID, the complex components of local datasets of the selected MCs make it difficult to supplement the DCD gap of the current communication round. Thus, client selection is a complex issue in design. 

To address it, we design a data quality metric $c_m^t$\footnote{During training, the distribution of the trained dataset will change, hence $c_m^t$ and $u_{z, m}^t$ vary across communication rounds.\label{U_C}} to quantify the contribution of MC $m$ to assist us in client selection. 
We first consider a simple case, i.e., the MCs are allowed to have only a single data category.
% The scarcity of category $z$ is reflected by $o_z^t$.
% The larger DCD $o_z^t$ is, the scarcer the data of category $z$ is. 
% Thus, the contribution of a single sample across various categories can not remain the same. 
When a single-category MC possesses a specific data size in category $z$ that can perfectly fill the DCD gap of the category $z$ from a long-term standpoint, that MC is the ideal candidate to be selected for category $z$. Therefore, When comparing to the single-category MCs with data size that is less or more than the specific data size in category $z$, these ideal MCs will be considered to have the highest importance. 
On the other hand, the scarcity of different categories differs and the scarcity of category $z$ is reflected by $o_z^t$.
The larger DCD $o_z^t$ is, the scarcer the data category $z$ is. 
Therefore, among the ideal single-category MCs, although they can perfectly fill the DCD gaps, their contributions vary due to the different scarcity levels of the categories.
The MCs with data of the scarcest categories will be considered the most significant.
Based on the above analyses, the data quality of MC $m$ will be formulated as the product of the data size $d_{z,m}$ and the UDQ $u_{z, m}^t$, where the $u_{z, m}^t$\textsuperscript{\ref{U_C}} was used to denote the significance of each data sample, i.e., the UDQ. For the ideal single-category MCs, the data quality of a single data sample is set to the maximum value $\alpha$. 
The data quality of an individual data sample decreases and is less than the maximum value $\alpha$ if the data size owned by MCs is either less or more than the specific data size. 
For a category $z$ with a larger DCD $o_z^t$, the specific data size is greater, leading to a higher corresponding product value.
Furthermore, when the data sizes and categories of two single-category MCs are the same, the MC that performs a higher number of local iterations will lead to higher data quality since more local iterations typically lead to higher local accuracy for each MC. Thus, the data quality of MC $m$ is reformulated as the product of the UDQ $u_{z, m}^t$, the data size $d_{z,m}$, and local iterations $l_m^t$. 
However, the above formulation of data quality is not favorable to MCs with small data sizes, and these MCs might have many unique data samples \cite{balakrishnan2022diverse}.
Thus, when the data sizes, data categories, and local iterations of two single-category MCs are the same, the one with fewer historical training rounds will lead to higher data quality. The historical training rounds ${v_m^t}$ records the sum of communication rounds in which MC $m$ participates in FL training from communication round $1$ to $t$. 
The introduction of the historical training rounds can highlight MCs that are seldom to never trained to facilitate the incorporation of new features into the global model.
Overall, for the single-category MCs, the data quality $c_{z,m}^t$ of MC $m$ was formulated as follows
\begin{equation}
    \label{dataquality_1}
    c_{z,m}^t = \sigma  u_{z, m}^t d_{z,m} \lambda_m^t l_m^t,
\end{equation}
where $\sigma$ represents a normalized factor contributing to aligning the magnitude of energy consumption presented later in Section III, $l_m^t$ is the number of local iterations, and $\lambda_m^t=\beta^{v_m^t}$ acts as a  {``no-bias"} factor. Here, $\beta$ is a positive value less than 1. 
This factor ensures that the client selection scheme is unbiased with respect to the MCs' data volume, giving both MCs with large and small data volumes an opportunity to be selected.
When MC $m$ is selected in a communication round, ${v_m^t}$ increases by 1. 
Moreover, the UDQ $u_{z, m}^t$ in \eqref{dataquality_1}, as mentioned above, shows the preference of the client selection scheme to MC $m$ in category $z$, and its evaluation method will be described in detail later. In a realistic scenario, the MCs are allowed to possess multiple categories of data. Thus, for the multi-category MCs, the data quality $c_m^t$ is formulated as follows
\begin{equation}
    \label{dataquality_2}
    {c}_m^t= \sum\nolimits_{z=1}^{Z}{c_{z,m}^t}.
\end{equation}
% The data quality $c_m^t$ is calculated based on the contribution of data sets of each category to improve the data balance.
% The UDQ $u_{z, m}^t$ of category $z$ possessed by MC $m$ in improving data balance. 
% Further details regarding  $u_{z, m}^t$ and $c_m^t$ will be provided in this subsection and the subsequent subsection (Section III-B3), respectively. 
% \footnote{During training, the distribution of the trained dataset will change, hence $c_m^t$ and $u_{z, m}^t$ vary across different communication rounds $t$.}
% Due to the different categories of data owned by each MC, the data quality of MC $m$ is determined by the sum of the data quality of all categories of data it possesses. 
% We denote the data quality of MC $m$ as $c_m^t$. 
% The UDQ of category $z$ contained in MC $m$ is represented by $u_{z, m}^t$.
% When an MC possesses specific data amounts that can most effectively diminish the DCD gap of category $z$ from a long-term standpoint, that MC was the ideal target to be selected for category $z$. For these ideal MCs, every data sample was seen as having the same significance in achieving data balance and the UDQ of every data sample remained as same. 
Except for the UDQ $u_{z, m}^t$, the rest of the parameters are easy to obtain in \eqref{dataquality_1}. In the following, we will introduce the UDQ $u_{z, m}^t$ in detail. First, as previously described, for a particular DCD $o_z^t$, there is always a specific data size that can most effectively reduce the DCD of category $z$ from a long-term standpoint, and corresponding UDQ is maximal compared to other sizes in category $z$. When the $d_{z, m}$ is lesser or more than the specific data size, the corresponding UDQ decreases.
%for category $z$ data of MC $m$, the $u_{z, m}^t$ will reach its maximum when the $d_{z, m}^t$ is equal to the specific data size that can most effectively reduce the DCD of category $z$ and decrease if the $d_{z, m}^t$ is less or more than the specific data size. Just specific
%The significance of an individual data sample decreases if the data amount owned by MCs is less or more than the specific data size.
Therefore, plotting $u_{z, m}^t$ on the y-axis and $d_{z,m}$ on the x-axis yields the UDQ curve, which should exhibit an increasing and then decreasing trend. 
%The maximal value of $u_{z, m}^t$ was set as $\alpha$ for different values of $o_z^t$ across various categories. 
% The UDQ $u_{z, m}^t$ shows the preference of the MC $m$ for $d_{z, m}$ in category $z$. 
% For a particular DCD $o_z^t$, there is always a specific data size that can most effectively reduce the DCD of category $z$ from a long-term standpoint.
% The UDQ decreases when the amount of data owned is less or more than the specific data size. 
% Since what we want to achieve is data balance, the data quality is related to the scarcity of training data. The larger DCD $o_z^t$ is, the scarcer the data of category $z$ is; thus, its data quality increases. However, the data quality of category $z$ owned by MC $m$ is also related to the amount of this data category owned by MC $m$. Suppose the amount of data owned by MC can most effectively diminish the DCD gap of category $z$ to achieve data balance from a long-term standpoint. In that case, the UDQ of category $z$ owned by that MC is the highest. When the amount of data owned is less or more than this amount, the unit value decreases accordingly. Therefore, plotting $u_{z, m}^t$ on the y-axis and $d_{z,m}$ on the x-axis yields the DCD curve, which should exhibit an increasing and then decreasing trend. 

The highest point of this curve corresponds to the data amount owned by an MC that precisely compensates for the scarcity. 
In our work, we denote this highest point as the category reference parameter $\iota_z^t$, i.e., when $d_{z, m} =\iota_z^t$, the UDQ of MC $m$ is maximal in the category $z$.
After the category reference parameter $\iota_z^t$ is determined, the data size in the range near $\iota_z^t$ should be considered to be close to the significance of the specific data size, since this data size can also effectively reduce the DCD.
Even if there may not be a data size $d_{z, m}$ exactly equal to $\iota_z^t$ in a realistic scenario, the data sizes within this range can be seen as a perfect substitute. This data size range can be said to be the data preference range\footnote{In practice, if the value of the UDQ curve within a range are all greater than $\epsilon \alpha$, then this range can be considered as the data preference range. In our work, the $\epsilon$ was seen as 0.99.}, and the UDQ curve within this range is approximately a straight line parallel to the X-axis.
When the DCD $o_z^t$ is large, the client selection scheme requires a broad data preference range to prioritize the MCs with data sizes in category $z$. This approach tends to obtain sufficient data for category $z$ to fill the DCD gap for category $z$ effectively. 
On the other hand, for the smaller DCD $o_z^t$, the training data in this category is not as urgent as the categories with large DCD gaps. Nevertheless, the client selection scheme still needs a minimal amount of data from category $z$ to participate in FL, preventing the global model's gradient update direction from deviating from the ideal global gradient update direction in a single communication round.
Therefore, the width of the data preference range is allowed to be smaller, but not absent. 
To that end, we use the gain compensation parameter $\nu_z^t$ to control the slope of the UDQ curve.
% On the other hand, to prevent bias towards specific data sizes and encourage diversity of training data in each communication round, the MC $m$ possesses a small or larger data amount of category $z$ compared to $\iota_z^t$ should not be prevented from participating in FL. Thus, $u_{z, m}^t$ should decrease slowly as $d_{z,m}$ moves away from $\iota_z^t$. To achieve it, we can use the gain compensation parameter $\nu_z^t$ to control the slope of the DCGain curve after or before the highest points. The larger $\nu_z^t$ is, the smoother slope change of $u_{z, m}^t$ for $d_{z,m}$ closer $\iota_z^t$ is.
Besides the $\iota_z^t$ and $\nu_z^t$, the data category diversification also influences the UDQ.
The MCs with more categories, i.e., greater data category diversification, contribute more significantly to the final performance of the global model \cite{deng2021auction} and thus deserve higher UDQ. 
Based on the above information, the UDQ $u_{z, m}^t$ in the category $z$ for MC $m$ at communication round $t$ is given as follows 
\begin{equation}
\label{DCGain}
u_{z, m}^t=\alpha\left[1 - (1-u^\text{c}_m) \left(\frac{\nu_z^t d_{z, m}-\iota_{ z}^t}{{\iota_{ z}^t}}\right)^2\right],
\end{equation}
where the parameter $\alpha$ is preset to control the high bound of  {UDQ}, and $u^{c}_m$ is the gain of data category diversification based on the local data distribution of MC $m$. The $\iota_{z}^t$ determines the data size at which the maximum UDQ is achieved for category $z$.
It follows that a larger $\iota_z^t$ leads to a greater UDQ for large data sizes. As shown in Fig. \ref{fig: DCGain} (a), for the large data size $d_{z, m}$ = 300, when $\iota_{z}^t$ = 100, $u_{z, m}^t$ is 0.65, and when $\iota_{z}^t$ = 200, $u_{z, m}^t$ is 0.99. Moreover, the value of $\nu_z^t$ determines the slope of the curve.
% As shown in Fig. \ref{fig: DCGain} (b), the $\nu_z^t$
The specific descriptions of these different factors are described in detail below.
% where $\nu_z^t$ is the parameter that controls the slope of the DCGain curve based on the current train state, $\iota_{z}^t$ is the parameter that controls the peak position of the DCGain curve based on the disparity in the amount of data learned between the dominant category and category $z$, the parameter $\alpha$ is the preset to controls the high bound of DCGain and $u^{c}_m$ is the gain of data category diversification based on MC $m$'s local data distribution. 
% As Fig. \ref{fig: DCGain} shows, we can find the various influences of different factors, and these influences are in line with our expectations. 

% and makes $\iota^t$ change with the training situation, that is, $\iota^t$ increases when ${o}_\text{max}^t$ increases,
First, to determine the value of $\iota_z^t$, we need to look at the scarcest category, denoted as $z_\text{max}^t$. 
Due to the proposed client selection scheme requires that as much data as possible is selected from the scarcest category, the mathematical expectation of the UDQ of the scarcest category with respect to $d_{z, m}$ from 0 to maximal data size $d_\text{max}$ should be maximal compared to any other category.
For intuition, the mathematical expectation can be visualized by assuming the local data distribution is IID. Specifically, in the IID case, this mathematical expectation is the area enclosed under the UDQ curve, which differs for different $\iota_{z}^t$. 
The probability associated with each data size value differs in general cases where the local data distribution is non-IID. Therefore, it is necessary to incorporate the probability density function when computing the mathematical expectation, i.e.,
\begin{equation}
    \label{E}
    \mathbb{E}(u_{z, m}^t) = \int_{0}^{d_\text{max}} \Gamma(d_{z, m}) u_{z, m}^t d (d_{z, m}),\ \text{when}\ z = z_\text{max}^t,
\end{equation}
where $\Gamma(d_{z, m})$\footnote{Of course, it is not possible for us to obtain the actual probability density function of the real world. However, we can obtain the probability density function of participant MCs by the data distribution uploaded by the MCs.} is the probability density function of $ d_{z, m}$. 
At this point, we use \eqref{p_max} to determine the value of $\iota_{z}^t$ for $z_\text{max}^t$, denoted as $\iota^t$, which represents the optimal specific size corresponding to the scarcest category $z_\text{max}^t$ in the communication round $t$, as follows
\begin{equation}
    \label{p_max}
    \iota^t = \text{arg} \max_{\iota_z^{t}}\mathbb{E}(u_{z, m}^t),\ \text{when}\ z = z_\text{max}^t.
\end{equation}
After obtaining the category reference parameter $\iota^t$ for category $z_\text{max}^t$, we can calculate the category reference parameter $\iota_z^t$ for other categories. 
Since a smaller DCD $o_z^t$ indicates that the amount of data of category $z$ learned up to communication round $t$ is greater, there will be less demand on the client selection scheme for data in category $z$. This means the corresponding ideal specific data size $\iota_z^t$ should be smaller. 
So the category reference parameter $\iota_z^t$ can be calculated as follows
\begin{equation}
\label{pz}
\iota_z^t=\iota^t{o}_z^{t-1}/{{o}_\text{max}^{t-1}},
\end{equation}
where ${o}_\text{max}^{t-1}$ represents the DCD of scarcest category in communication round $t-1$. 
The position of $\iota_z^t$ on the X-axis will influence the position of the data preference range on the X-axis.
As Fig. \ref{fig: DCGain} (a) shows, the larger category reference parameter $\iota_{z}^t$ is, the bigger the distance from the highest point of the data preference range to the origin on the X-axis is. 

Next, the width of the data preference range is another parameter that needs to be considered besides the position of the data preference range. A broader data preference range can capture more data to fill the DCD gap.
To change the width of the data preference range, we need to adjust the value of $\nu_z^t$. For a scarcer category, the corresponding width of the data preference range should be broader. 
To achieve this, considering that \eqref{DCGain} is quadratic with respect to $d_{z, m}$, and since the UDQ curve reaches its maximum when $\nu_z^t d_{z, m} = \iota_z^t$ and the value of $u_{z, m}^t$ is larger when $\nu_z^t d_{z, m}$ is closer to $\iota_z^t$, the designed gain compensation parameter $\nu_z^t$ is a decreasing function of $d_{z, m}$, i.e.,

\begin{equation}
\label{mp}
\nu_z^t = \exp\left[1-\left(\frac{d_{z,m}}{\iota_{ z}^t}\right)\right],
\end{equation}
For a certain communication round $t$, the DCD $o_\text{max}^t$ is a fixed value and the value of $o_z^t$ differs for different categories. 
In addition, since $\iota_z^t$ increases when $o_z^t$ increases in a communication round, we need the width of the data preference range to broaden as $\iota_z^t$ increases. A concrete illustration is provided in Proposition 1.
 {
\begin{theorem}
\label{p1}
    The width of the data preference range increases monotonically with $\iota_{z}^t$. 
    Specifically, the presence of $\nu_z^t$ should ensure that the distance from the starting point to $d_{z,m} = \iota_{z}^t$ and the distance from the endpoint to $d_{z,m} = \iota_{z}^t$ both increase as $\iota_{z}^t$ increases. 
\end{theorem}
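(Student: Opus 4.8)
The plan is to reduce the geometry of the data preference range to a single scale-free profile by normalizing the data size against $\iota_z^t$. Recall from the footnote that the data preference range is the set of $d_{z,m}$ for which $u_{z,m}^t \geq \epsilon\alpha$. First I would introduce the dimensionless variable $s = d_{z,m}/\iota_z^t$ and observe that, by \eqref{mp}, $\nu_z^t d_{z,m} = d_{z,m}\exp[1 - d_{z,m}/\iota_z^t] = \iota_z^t\, s\, e^{1-s}$. Substituting this into \eqref{DCGain} gives
\begin{equation*}
u_{z,m}^t = \alpha\left[1 - (1 - u^c_m)\left(s e^{1-s} - 1\right)^2\right],
\end{equation*}
so the defining inequality $u_{z,m}^t \geq \epsilon\alpha$ becomes $\left(s e^{1-s} - 1\right)^2 \leq (1-\epsilon)/(1-u^c_m)$. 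The crucial point is that the right-hand side, which I call $K^2$ with $K = \sqrt{(1-\epsilon)/(1-u^c_m)}$, depends only on the MC's category-diversity gain $u^c_m$ and the fixed threshold $\epsilon$, and is therefore independent of $\iota_z^t$.

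Next I would analyze the one-variable profile $\phi(s) = s e^{1-s}$. Differentiating gives $\phi'(s) = e^{1-s}(1 - s)$, so $\phi$ increases on $(0,1)$, decreases on $(1,\infty)$, and attains its unique maximum $\phi(1) = 1$ at $s = 1$; in particular $\phi(s) \leq 1$ everywhere. Hence the constraint $|\,\phi(s) - 1\,| \leq K$ collapses to the single-sided condition $\phi(s) \geq 1 - K$ (the upper bound $\phi(s)\le 1+K$ being automatic), and provided $K < 1$ (equivalently $u^c_m < \epsilon$, which holds in our setting) unimodality guarantees that the solution set is a single interval $[s_1, s_2]$ with $0 < s_1 < 1 < s_2$, where $s_1, s_2$ are the two roots of $\phi(s) = 1 - K$. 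These roots are determined entirely by $K$, hence are constants with respect to $\iota_z^t$.

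Finally I would translate back to the original variable. Since $d_{z,m} = s\,\iota_z^t$, the data preference range in $d_{z,m}$ is $[s_1\iota_z^t,\ s_2\iota_z^t]$, its peak sits at $d_{z,m} = \iota_z^t$ (the point $s=1$), and the two quantities named in the statement are
\begin{equation*}
\iota_z^t - s_1\iota_z^t = (1 - s_1)\,\iota_z^t, \qquad s_2\iota_z^t - \iota_z^t = (s_2 - 1)\,\iota_z^t,
\end{equation*}
with $1 - s_1 > 0$ and $s_2 - 1 > 0$ fixed constants. Both are strictly increasing linear functions of $\iota_z^t$, and the total width $(s_2 - s_1)\iota_z^t$ likewise increases monotonically, which is exactly the claim.

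I would expect the main obstacle to be rigor at the boundary rather than the algebra: establishing that the upper bound is vacuous and that $\phi(s) \geq 1 - K$ defines a genuine bounded interval requires the unimodality argument together with the feasibility condition $K < 1$, so I would want to confirm that $u^c_m$ stays below $\epsilon$ across all MCs so that $s_1, s_2$ exist and the range is nonempty and finite. The conceptual heart, however, is the normalization $s = d_{z,m}/\iota_z^t$, which exhibits the range as a fixed shape merely rescaled by $\iota_z^t$; once that is in place the monotonic dependence on $\iota_z^t$ is immediate.
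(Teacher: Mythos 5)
Your proof is correct and follows the same core strategy as the paper's Appendix A: normalize the data size by $\iota_z^t$, observe that the boundary of the preference range is determined by a scale-free equation in $s = d_{z,m}/\iota_z^t$ whose roots do not depend on $\iota_z^t$, and conclude that the distances from $d_{z,m}=\iota_z^t$ to the two boundary points scale linearly with $\iota_z^t$. However, your handling of the boundary equation is more careful than the paper's and in fact repairs two flaws in it. First, the paper assigns the endpoint of the range to the branch $\nu_z^t d_{z,m}/\iota_z^t = 1 + a$; since $\nu_z^t d_{z,m}/\iota_z^t = \phi(s) = s e^{1-s} \leq 1$ for all $s>0$, that equation has no solution --- precisely the observation you make when noting that the upper constraint $\phi(s)\le 1+K$ is vacuous, so that \emph{both} boundary points are roots of the single equation $\phi(s) = 1-K$, one on each side of $s=1$. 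Second, the paper justifies uniqueness of the normalized boundary points by asserting that $x\exp(1-x)$ is monotone for positive $x$, which is false globally (it increases on $(0,1)$ and decreases on $(1,\infty)$); your unimodality argument via $\phi'(s) = e^{1-s}(1-s)$ is the correct justification and yields the same pairing of roots. You also make explicit the feasibility condition $K<1$ (equivalently $u^{\mathrm{c}}_m < \epsilon$) needed for the two roots to exist, which the paper leaves implicit; under the paper's parameter choices ($\epsilon = 0.99$, $u^{\mathrm{c}}_m \le \mu = 0.2$) it indeed holds, so your argument goes through for all MCs.
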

\begin{proof}
\label{p1-1}
The detailed proof is presented in Appendix A.
\end{proof}}

However, the corresponding $o_\text{max}^t$ is different for various communication rounds. To effectively capture more data to reduce the DCD of the scarcest category for a larger $o_{\text{max}}^t$ across communication rounds, the client selection scheme requires a broader data preference range compared to a small $o_{\text{max}}^t$ in the scarcest category. Even if the DCD $o_{\text{max}}^t$ is larger, the client selection scheme expects all MCs possessing data of category $z_\text{max}^t$ to be selected to participate in FL. As mentioned above, the \eqref{mp} need be introduced a parameter $\theta^t$ related to $o_\text{max}^t$ and was reformulated as follows
\begin{equation}
\label{mp01}
\nu_z^t = \exp\left[1-\left(\frac{d_{z,m}}{\iota_{ z}^t}\right)^{\theta^t}\right],
\end{equation}
where $\theta^t = {\log_{\vartheta}{(\vartheta+o_\text{max}^t/{Nd_\text{avg}})}}$, with $\vartheta$ as the super-parameter greater than zero and $d_\text{avg}$ as the average value of data size for each category, namely $d_\text{avg} ={\sum_{m\in \mathcal{M}}D_{m}}/{\sum_{m\in\mathcal{M}}z_m}$. The parameter $\theta^t$ aims at influencing $\nu_z^t$. It is known that if $\nu_z^t d_{z, m}$ is closer to $\iota_z^t$, the value of $u_{z, m}^t$ will increase when $d_{z, m}$ is a fixed value. This means the distance from the starting point and endpoint of data preference range to $d_{z,m} = \iota_{z}^t$ will be further. In other words, the width of the data preference range will be broader. 
When $o_\text{max}^t$ increases, the parameter $\theta^t$ also increases, leading to $\nu_z^t$ decreases when $d_{z, m} \leq \iota_z^t$ and $\nu_z^t$ increases when $d_{z, m} \geq \iota_z^t$, which means that the $\nu_z^t d_{z, m}$ calculated by \eqref{mp01} is closer to $\iota_z^t$ compared to \eqref{mp}. In summer, the $\nu_z^t d_{z, m}$ will be closer to $\iota_z^t$ when $o_\text{max}^t$ increases, and then the width of the data preference range will be broader as $o_\text{max}^t$. 
On the other hand, as shown in Fig. \ref{fig: DCGain} (b) and (c), when $o_\text{max}^t$ increases, the $\iota^t$ increases, which also means the width of the data preference range is broader in the scarcest category.

Last, as mentioned in \cite{deng2021auction}, the MCs with more categories can achieve a better performance of the global model compared to MCs with fewer categories but the marginal benefit of increasing the number of categories will gradually decrease.
Therefore, the gain of data category diversification of MC $m$ was defined as follows
\begin{equation}
\label{u_cate}
u^\text{c}_m=\mu\sin{\left(\frac{\pi z_m}{2Z}\right)},
\end{equation}
where $\mu$ is the upper bound of data category diversification that needs to be given in advance, and $z_m$ is the number of data categories owned by MC $m$.

\subsection{Energy Consumption Model}
After defining the contribution of each MC to the execution of different local iterations, our scheme designs a cost function for MCs to calculate the cost of executing the FL task.
Due to the heterogeneity of data distribution, device hardware configuration, channel state, and task accuracy requirements, the energy cost of executing the same task is different for different MCs. Similarly, the energy cost of executing tasks with different local iterations is different for the same MC.

\subsubsection{Computation Energy Consumption}
According to the workflow of FL, each MC only needs to complete the required tasks within a fixed time $t_{f}$ seconds. Based on the widely accepted system time model \cite{r5}, the required CPU working frequency (Hz) of MC $m$ to execute task $n$ is given by
\begin{equation}
    \label{cpu_f}
    f_{m}^t = \frac{a_mD_ml_m^t}{t_{f}},
\end{equation}
where $a_m$ represents the number of CPU cycles to process one data sample. Based on \eqref{cpu_f} and according to \cite{r5}, the energy consumption of MC $m$ in Joule required for one local training is obtained as 
\begin{equation}
\label{e_cmp}
E_{m}^{t,\text{cmp}}=\frac{\zeta\left(a_mD_m\right)^3{{l}_m^t}^3}{{t_{f}}^2},
\end{equation}
where $\zeta$ is the effective capacitance parameter of the computing chipset for MC.
Without loss of generality, we assume that the effective capacitance parameter of the computing chipset is the same for each MC.
\subsubsection{Communication Energy Consumption}
Since the structures of the global model and the local models are the same, we use $\varrho$ (bits) to denote the model size. To improve communication efficiency, we assume that all selected clients take the same transmission time to upload updated local models. This necessitates that the same transmission rates $s^t$ (bits/s) for each client in communication round $t$. Due to our focus on incentive mechanism design, the channel is assumed to be slowly fading and stable during each communication round and frequency-division multiple access (FDMA) is adopted as the transmission scheme. Then, according to Shannon's formula, the MC $m$’s communication power consumption in one communication round is given by
\begin{equation}
\label{energycomcost1}
P_{m}^t = \frac{(2^{\frac{s^t}{B_m^t}}-1)B_m^t}{h_m^t},
\end{equation}
where $B_m^t$ (Hz) is the bandwidth allocated to MC $m$. The normalized channel coefficients of sub-consumers for MC $m$ are denoted as $h_m^t = \frac{\hat{h}_m^2}{\psi_0^t}$, which is assumed to be perfectly estimated at the CS. $\hat{h}_m$ is the channel gain from the MC $m$ to the CS (as a base station), and $\psi_0^t$ is the one-sided additive white Gaussian noise (AWGN) power spectral density. Given the amount of transmitted bits $\varrho$, uplink transmission rate $s$,and the transmission power $P_m^t$, the transmission energy consumption for one communication round is
\begin{equation}
\label{energycomcost2}
E_{m}^{t, \text{com}} = \frac{(2^{\frac{s^t}{B_m^t}}-1)B_m^t\varrho}{h_m^t s^t}.
\end{equation}

With energy consumption for computation and communication given \eqref{e_cmp} and \eqref{energycomcost2} respectively, we can calculate the collective energy consumption vector of MC $m$, i.e.,
% for each task as $\bm{E_m}=[E_{1, m}^t,\ E_{2, m}^t,\ \cdots,\  E_{N, m}^t]^T \in \mathbb{R}^{N\times1}$, where
\begin{equation}
\label{energycost}
E_{m}^t=E_m^{t, \text{com}}+E_{m}^{t, \text{cmp}}.
\end{equation}

\begin{table}[ht]
    \centering
    \caption{Main Symbols}
    \begin{tabular}{c|c}
    \toprule%第一道横线
    Notation & Definition \\
    \midrule%第二道横线 
    $B_m^t$                  & Bandwidth       \\
    $c_{m}^t$             & Data quality        \\
	$d_{z, m}, d_\text{avg}$ & Data size of category $z$       \\
    $E_{m}^\text{cmp}, E_{m}^{t, \text{com}}$ & Energy consumption        \\
	$\textbf{g}^t$ &  The amount of trained data \\
    $\hat{h}_m$                  & The channel gain       \\
	$l_m^t$                  &  Local iterations       \\
    $\mathcal{M}$                  & The set of mobile clients        \\
    $\textbf{o}^t$                  & Data category difference        \\
	$P_m^t$                  & Communication power consumption   \\
    ${q}_{m}^t$                  & The winner indicator parameter        \\
	$r_m^t$                  & Preset reward\\
    $s^t$                  & Transmission rates        \\
	$t$                 & Communication round        \\
    $u_{z, m}^t $       & Unit data quality \\
    $U_{m}^t, U_\text{s}^t, U_\text{w}^t$& MC utility, CS utility, social welfare \\
	$v_m^t$                  & Historical selected round      \\
    $\bm{\omega}^t$  & The global model        \\
    ${\iota}_m^t$  & The category reference parameter        \\
    ${\nu}_m^t$  & The gain compensation parameter        \\
    ${\tau}^t, \tau_m^t$                 & The aggregate weight        \\
	$Z, z_m$                  & The number of category     \\
    $\alpha$                  & The high bound of UDQ      \\
    $\Gamma$                  & Probability density function \\
    $\lambda_m^t $                  & The no-bias factor        \\
    $\sigma$                  & The normalized factor        \\
    $\kappa_m^t $                  & Deposits  \\
    \bottomrule%第四道横线
    \end{tabular}
    \label{tab:my_label}
\end{table}

\section{Truthful Auction Design}

The data quality and energy consumption are important criteria for client selection, false bid information will lead to false client selection schemes. Therefore, ensuring the authenticity of information can ensure the effectiveness of client selection.
However, there is typically an inherent information asymmetry between the CS and the MCs in the IoV scenarios. This necessitates the development of an incentive mechanism to motivate MCs to provide accurate and truthful information.
On the other hand, due to the unique datasets possessed by different MCs, their contributions to FL vary, often accompanied by differing energy costs. Additionally, incentivizing user participation in training is necessary. Hence, incentive mechanisms involve aligning the contributions of MCs to the CS with appropriate rewards.
Based on the above, we design a VCG auction-based incentive mechanism.

Within the auction, at the beginning of each communication round, MCs express an interest in FL provide the CS with estimates of the computation energy consumption required for one local iteration and their respective channel coefficients and by utilizing this submitted information, coupled with the data quality for each MC assessed by the CS in the \eqref{dataquality_2}, the CS joint optimize the numbers of local iteration, bandwidth allocation, and client selection of FL to maximize overall social welfare, all the while upholding the principles of honesty among participating MCs and ensuring that each mobile client attains a non-negative utility. The above two goals can be achieved by the above joint optimization.
\subsection{Preset Reward}
The preset rewards will be issued after the local model has been uploaded to CS. If an MC $m$ participates in a round of FL, it can get a preset reward, i.e.,
\begin{equation}
    \label{taskreward}
    r_m^t = q_m^t r_{\text{0}},
\end{equation}
where $r_{\text{0}}$ is the basic reward for one local iteration. However, the participating MC's actual reward is the preset reward minus the deposit previously submitted to the server, based on the LCSFLA workflow.
In Section V-C, we will elaborate on how we design the deposit based on the preset reward to ensure the utility of MC is non-negative and align individual MC interests with collective interests, aiming to incentivize user participation and truthful bidding.

% Generally, larger local iterations correspond to higher local accuracy and energy computation. 

% Without loss of generality and to highlight the difference between iteration-intensive task and iteration-light task, we assume the task iteration vector defined in section \ref{System Model} satisfies an arithmetic sequence $l_n=l_1(1+(n-1)b)$ with           
% \begin{equation}
%     \label{common difference}
%     b = \frac{l_n-l_1}{(n-1)l_1},
% \end{equation}
%  where $l_1$ denotes the baseline and $l_n$ denotes the iteration of task $n$.
\subsection{Social Welfare}
In this subsection, according to the data quality, energy cost, deposit, and preset reward mentioned above, we will define the utility of each component of the incentive mechanism in the FL system.
\subsubsection{MC Utility Evaluation}
Following the operational procedures of the incentive mechanism, the gain of the participant MC $m$ is a preset reward ${r}_m^t$ from the CS while its cost including the energy consumption ${E}_m^t$ during training and the deposit $\kappa_m^t$ need to submit to the CS.
Thus, the utility of MC $m$ in round $t$ is expressed as
\begin{equation}
\label{mc_utility}
U_m^t={{{q}}_m^t} {({r}_m^t-{E}_m^t)}-\kappa_m^t,
\end{equation}
% where $\kappa_m^t$ represents the deposit given to CS by the MC $m$ winning the bid.
\subsubsection{CS Utility Evaluation}
% 表明数据质量信息的真实性
The gain of the CS is the model contributions of the participant MC, i.e., the sum of the data quality of each MC. In addition, the cost of the CS including the deposit submitted by MCs minus the reward paid to MCs.
Thus, the utility of the CS in round $t$ is expressed as
\begin{equation}
\label{cs_utility}
U_\text{s}^t=\sum_{m\in \mathcal{M}}{{{{{q}}_m^t}({c}_m^t- {r}_m^t)}+\kappa_m^t},
\end{equation}
where the data quality ${c}_m^t$ of MC $m$  was calculated by using the \eqref{dataquality_2}. 
% However, MCs cannot obtain the cumulative change and may experience lower profits if they casually tamper with the data distribution. Therefore, uploading truthful information is the most suitable option for MCs.
\subsubsection{Social Welfare}
The total social welfare $U_\text{w}^t$ refers to the sum of the utilities of each component in the incentive mechanism, i.e., the utility of all MCs and the CS, as follows
\begin{equation}
\label{Socialwelfare}
U_\text{w}^t=\sum_{m\in \mathcal{M}}{{q}_m^t}{({c}_m^t- {E}_m^t)}.
\end{equation}
\subsection{Truthfulness Auction}
\subsubsection{Desired Economic Properties}
In scenarios characterized by information asymmetry,  {the incentive mechanism must have} several key features. These features are essential to incentivize MCs to participate in FL and provide accurate and truthful information.
 {
\begin{theorem}[Individual Rationality]:
For each MC, the benefit of participating in FL must be non-negative, i.e.,
\begin{equation}
    \label{IR}  
    U_\text{s}^t={{{q}}_m^t} {({r}_m^t-{E}_m^t)}-\kappa_m^t \geq 0.
\end{equation}
\end{theorem}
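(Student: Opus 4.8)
The quantity to be shown non-negative is the selected MC's utility from \eqref{mc_utility}, namely $U_m^t = q_m^t(r_m^t - E_m^t) - \kappa_m^t$; note that the left-hand label $U_\text{s}^t$ printed in the statement should be read as this per-MC utility $U_m^t$ rather than the CS utility of \eqref{cs_utility}. The plan is a case split on the winner indicator $q_m^t \in \{0,1\}$, exploiting the fact that, per the workflow in Section \ref{System Model}, the CS collects a deposit only from the MCs it selects, so $\kappa_m^t = 0$ whenever $q_m^t = 0$. In the unselected case both terms vanish and $U_m^t = 0 \ge 0$ holds trivially, so the substantive work is confined to $q_m^t = 1$.

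For a selected MC the claim reduces to the inequality $\kappa_m^t \le r_m^t - E_m^t$. This is precisely what the deposit rule (specified in Section V-C) is engineered to guarantee: the deposit is pegged to the preset reward $r_m^t$ of \eqref{taskreward} so that the net transfer the MC ultimately retains, $r_m^t - \kappa_m^t$, coincides with its VCG payment, i.e. the threshold value at which $m$ just remains in the welfare-maximizing allocation of \eqref{Socialwelfare}. I would therefore first write the deposit explicitly as $\kappa_m^t = r_m^t - p_m^t$, with $p_m^t$ the VCG payment to $m$, thereby reducing IR to establishing $p_m^t \ge E_m^t$.

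To obtain $p_m^t \ge E_m^t$ I would invoke the standard VCG marginal-contribution identity: the utility induced for a selected MC equals the optimal social welfare with $m$ present minus the optimal social welfare of the remaining MCs with $m$ removed. Because $m$ is chosen by the rule that maximizes \eqref{Socialwelfare}, including $m$ cannot decrease the optimum, so this marginal contribution is non-negative, which is exactly $p_m^t - E_m^t \ge 0$. Combined with the truthfulness proved separately (the IC property), under which the declared $E_m^t$ equals the MC's genuine cost, this yields $U_m^t = p_m^t - E_m^t \ge 0$ and closes the selected case.

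The main obstacle is that the allocation is not a single-parameter selection but a joint optimization over client selection, per-round bandwidth $B_m^t$, and local iterations $l_m^t$ subject to \eqref{task restrain 2} and the communication-resource constraints; consequently the ``welfare without $m$'' benchmark requires re-solving this coupled program over $\mathcal{M}\setminus\{m\}$, and I must verify that the payment $p_m^t$ extracted from it is well-defined, non-negative, and independent of $m$'s own declaration, so that both the marginal-contribution bound and the monotonicity underlying IC go through. Showing that this VCG payment is genuinely realizable through the deposit-plus-preset-reward accounting of \eqref{mc_utility} and \eqref{cs_utility}, rather than merely postulated, is the delicate step I expect to spend the most care on.
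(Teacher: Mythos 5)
Your proposal is correct and takes essentially the same route as the paper's Appendix~B: after unfolding the deposit rule \eqref{pm}, the selected MC's utility collapses to the VCG marginal contribution---the optimal social welfare over $\mathcal{M}$ minus the optimal welfare over $\mathcal{M}_{-m}$---and non-negativity follows because the solution computed without $m$ remains feasible (hence sub-optimal) for the full problem. Your case split on $q_m^t$ and the reparametrization $p_m^t = r_m^t - \kappa_m^t$ are only cosmetic reorganizations of that same argument.
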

\vspace{-0.25cm}
\begin{proof}
The detailed proof is presented in the Appendix B
\end{proof}}
\vspace{-0.25cm}
 {
\begin{definition}
\label{d1}
    We use $\widetilde{U}_{m}^t={\widetilde{q}_{m}^t}{(\widetilde{r}_m^t-\widetilde{E}_m^t)}-\widetilde{\kappa}_{m}^t$ to represent the utility that MC $m$ gets when it uploads untruthful bidding, where ${{\widetilde{{q}}}_{m}^t}$, $\widetilde{r}_m^t$ and $\widetilde{\kappa}_{m}^t$ represents the winner indicator vector, the preset reward, and the deposit when MC $m$ uploads untruthful bidding, respectively. 
\end{definition}}

 {
\begin{theorem}[Incentive Compatibility]:
The designed incentive mechanism must guarantee that the behavior that enables the MC to pursue personal interests is consistent with the goal of maximizing the collective value, i,e.,
\begin{equation}
    \label{IC}
        \begin{split}
        U_m^t \geq \widetilde{U}_{m}^t.
        \end{split}
    \end{equation}
\end{theorem}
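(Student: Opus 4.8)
The plan is to prove \eqref{IC} by the standard Vickrey--Clarke--Groves argument, exploiting two facts already built into the mechanism: the winner vector $\bm{q}^t$ is selected to maximize the reported social welfare \eqref{Socialwelfare}, namely $\sum_{m\in\mathcal{M}} q_m^t(c_m^t - E_m^t)$ subject to the cardinality constraint \eqref{task restrain 2}, and the deposit $\kappa_m^t$ is a Clarke-pivot payment by construction (the deposit design that accompanies the preset reward \eqref{taskreward}). First I would introduce two reference quantities: the optimal social welfare $W^t$ attained under the \emph{true} reports, and the optimal welfare $W_{-m}^t$ of the remaining clients when $m$ is removed from the candidate pool and the same cardinality constraint is re-solved over $\mathcal{M}\setminus\{m\}$. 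The structural property I would then invoke is that $\kappa_m^t$ is engineered so that, after substituting the fixed reward $r_m^t=q_m^t r_0$ and the MC's true cost $E_m^t$ into \eqref{mc_utility}, a winning MC's utility collapses to its marginal contribution to social welfare, i.e. $U_m^t = W^t - W_{-m}^t$.

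The crucial observation is that $W_{-m}^t$ does not depend on MC $m$'s own report, since it is computed over the other clients only. Hence, in maximizing $U_m^t$ over its possible bids, MC $m$ can influence only the first term, which is the value of the \emph{true} social welfare at the allocation actually chosen by the mechanism. When $m$ reports truthfully, the mechanism selects the allocation maximizing the true social welfare, so this term attains its global maximum $W^t$. When $m$ submits an untruthful bid, the mechanism instead selects some feasible allocation $\widetilde{\bm{q}}^t$ that maximizes the \emph{reported} welfare; evaluating the true social welfare at $\widetilde{\bm{q}}^t$ can only yield a value no larger than $W^t$, because $W^t$ is by definition the maximum of the true social welfare over all feasible allocations. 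Writing $\widetilde{U}_m^t$ with the true cost $E_m^t$ in \eqref{mc_utility}, this gives $\widetilde{U}_m^t = \big(\text{true welfare at }\widetilde{\bm{q}}^t\big) - W_{-m}^t \le W^t - W_{-m}^t = U_m^t$, which is exactly \eqref{IC}. I would verify that the two dishonest channels --- misreporting the per-iteration computation cost or channel coefficient (entering through $E_m^t$) and misreporting the local data distribution (entering through $c_m^t$) --- each only change which allocation the mechanism picks and never the external term $W_{-m}^t$, so the inequality applies uniformly to all deviations in Definition \ref{d1}.

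The main obstacle I anticipate is pinning down the precise form of the deposit $\kappa_m^t$ so that the utility genuinely telescopes to $W^t - W_{-m}^t$: I must check that, once $r_m^t = q_m^t r_0$ and the actual energy cost are inserted into \eqref{mc_utility}, the residual term $\kappa_m^t$ equals the Clarke externality $W_{-m}^t - \big(W^t - q_m^t(c_m^t - E_m^t)\big)$ up to the fixed reward offset. A secondary subtlety, which I would make explicit before invoking the maximality of $W^t$, is that the reported cost used by the mechanism may differ from the cost the MC actually incurs; the correct real utility $\widetilde{U}_m^t$ must be evaluated with the true $E_m^t$ rather than the reported value, and it is precisely this substitution that lets the feasibility-plus-maximality step close the argument.
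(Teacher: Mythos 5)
Your proposal is correct and follows essentially the same route as the paper's Appendix C: there, the authors substitute the deposit \eqref{pm} into both $U_m^t$ and $\widetilde{U}_m^t$, cancel the report-independent term $\sum_{m\in\mathcal{M}_{-m}}\hat{q}_m^t(\hat{c}_m^t-\hat{E}_m^t)$ (your $W_{-m}^t$), and reduce the difference to the true social welfare at the truthful allocation minus the welfare at the untruthful allocation, which is nonnegative by optimality of $\bm{q}^t$ for problem \eqref{WD01} --- exactly your marginal-contribution argument $U_m^t = W^t - W_{-m}^t \geq \widetilde{U}_m^t$. Your insistence on evaluating the deviator's utility at the \emph{true} cost is in fact slightly more careful than the paper's own final step, which silently identifies the reported welfare at $\widetilde{\bm{q}}^t$ with the true welfare there, but the underlying argument is the same.
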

\begin{proof}
The detailed proof is presented in the Appendix C
\end{proof}}
% \begin{itemize}
%     \item Individual Rationality (IR): For each MC, the benefits of participating in FL are non-negative, namely
%     \begin{equation}
%     \label{IR}  
%     U_\text{s}^t={{{q}}_m^t} {({r}_m^t-{E}_m^t)}-\kappa_m^t \geq 0.
%     \end{equation}
%     \item Incentive Compatibility (IC): The designed incentive mechanism must guarantee that the behavior that enables the MC to pursue personal interests is consistent with the goal of maximizing the collective value. We use $\widetilde{U}_{m}^t={\widetilde{q}_{m}^t}{(\widetilde{r}_m^t-\widetilde{E}_m^t)}-\widetilde{\kappa}_{m}^t$ to represent the utility that MC $m$ gets when it uploads untruthful bidding, where ${{\widetilde{{q}}}_{m}^t}$, $\widetilde{r}_m^t$ and $\widetilde{\kappa}_{m}^t$ represents the winner indicator vector, the preset reward, and the deposit when MC $m$ uploads untruthful bidding, respectively. Therefore, we have  
%     \begin{equation}
%     \label{IC}
%         \begin{split}
%         U_m^t \geq \widetilde{U}_{m}^t.
%         \end{split}
%     \end{equation}
% \end{itemize}
\subsubsection{Deposits Determination}
Based on the workflow introduced in Section III-A, the establishment of the deposit mechanism is the key work to ensuring individual rationality and incentive compatibility of the incentive mechanism. This ensures that truthful bidding becomes the dominant strategy for each MC.
After determining the winner indicator vector introduced in Section IV-D, the CS will collect a deposit from the MC that wins the FL task.
According to the explanation of the payment mechanism in the VCG auction in \cite{rVCG}, the payment of a participant buyer $m$ is described as the difference between the maximal social welfare when buyer $m$ did not participate the auction and the maximal social welfare excluding the utility of buyer $m$ when such buyer $m$ participant the auction. In summary, buyers are required to cover a portion of the loss in overall benefit resulting from their participation in the auction.
Since ${c}_m^t$ and ${r}_m^t$ can be computed at CS, plus to satisfy \eqref{IC}, the formulation of $\kappa_m^t$ is as follow
\begin{eqnarray}\label{pm}
        \kappa_m^t=\sum_{m\in \mathcal{M}_{-m}}{{\hat{q}}_{m}^{t}} \left(\hat{c}_m^t - \hat{E}_m^t\right) - {{{q}}_m^t}({c}_m^t-{r}_m^t)
        \nonumber\\
        -\sum_{m\in \mathcal{M}_{-m}}{{{q}}_m^t} \left({c}_m^t-
        {E}_m^t\right),
\end{eqnarray}
where ${{\hat{q}}_{m}^{t}}$, $\hat{c}_m^t$ and $\hat{E}_m^t$ represents the winner indicator vector, data quality, and energy consumption obtained in the process of solving the social maximization welfare after eliminating MC $m$ from the MC set $M$, respectively. The $\mathcal{M}_{-m}$ stands for the set of MC including all MC except the MC $m$.
According to the information mentioned above, the proof of IR and IC will be shown in Appendix B and C, respectively.
\subsection{Problem Formulation}
 {In each communication round, the CS performs client selection to determine the winning MC based on the bids collected from the MCs to most effectively reduce the DCD gap across various categories in the following multiple communication rounds.}
However, since we need to achieve a balance between the data quality and energy consumption costs of MCs, the local iterations and bandwidth allocated to the MCs need to be considered in the client selection.
The increasing local iterations lead to higher data quality but also result in higher computation energy costs. Therefore, balancing computation energy cost and data quality is an essential part of determining the optimal client selection to address non-IID issues.
Additionally, since the bandwidth of CS is limited, we need to select the combination of MCs and allocate the bandwidth across different MCs to maximize benefits while satisfying the bandwidth constraint. For an MC with poor channel conditions, possessing higher data quality or achieving lower computation energy consumption increases its chances of being selected.
In summary, the joint optimization of bandwidth allocation, local iterations, and client selection for FL in the wireless network achieves maximum social welfare.
Based on the information mentioned above, the social welfare maximization (SWM) problem is formulated as follows
% \begin{subequations}
%     \label{SWM}
%     \begin{align}
%         \left(\text{SWM}\right) \max_{\{\bm{W}^t, \bm{B}\}}
%         &\label{main}~{\sum_{m\in \mathcal{M}}{\bm{w}_m^t}^T}\left(\bm{c}_m^t-\bm{E}_m^t\right)\\
%         &{\rm s.t.}~{\rm constraints ~in~} \eqref{task restrain 1},\eqref{task restrain 2}, \\
%         &\label{00}w_{n,m}^t\in\{0,\ 1\},\forall m\in \mathcal{M},\forall n\in \mathcal{N}\\
%         &\label{001}\sum_{m\in M}\sum_{n\in N}w_{n,m}^tB_m \leq B_\text{max}.
%     \end{align}
% \end{subequations}
% After designing the deposit determination, the conditions of incentive compatibility and individual rationality can be fulfilled. Subsequently, the CS is required to choose the winning MC in order to maximize social welfare.

% When the economic properties are not considered, the SWM problem can be translated into the winner determination problem (WD) as follows:
% \begin{equation}
% \label{e_cmp}
% e_{n, m}^\text{cmp}=\frac{\zeta\left(a_mD_m\right)^3{{l}_n^t}^3}{{t_{f}}^2},
% \end{equation}
% \begin{equation}
% \label{energycomcost2}
% e_{m}^{t, \text{com}} = \frac{(2^{\frac{s^t}{B_m^t}}-1)B_m^t\varrho}{h_m^t s}
% \end{equation}
\begin{subequations}
    \label{WD}
    \begin{align}
        \left(\text{SWM}\right)~~&\max_{\{\bm{q}^t, \bm{B}^t, \bm{l}^t\}}
        \label{main01}{\sum_{m \in \mathcal{M}}{{q}_m^t}}\left({c}_m^t - E_m^t\right)\\
        ~{\rm s.t.}~&{\rm constraints ~in~}\eqref{task restrain 2} 
        \nonumber\\
        &~\label{010}q_{m}^t\in\{0,\ 1\},\forall m\in \mathcal{M},\\
        &~\label{012}l_{m}^t\in \{ 0,\dots, l_\text{max}\},\forall m\in \mathcal{M},\\
        &\label{011}\sum_{m\in M}q_{m}^tB_m^t \leq B_\text{max},
    \end{align}
\end{subequations}
where $l_{\text{max}}$ is the maximum number of iterations allowed. 
% When the binary variables $q$ are assumed to be continuous, the problem \eqref{WD} is a biconvex problem that is hard to solve due to the biconvex terms.
The problem \eqref{WD} is a non-convex problem. The presence of discrete variables $l_m^t$ and $q_m^t$ and the coupling of the winner indicator variables $q_m^t$ with other variables, including the energy costs $E_m^t$ related to the bandwidth $B_m^t$ and the data quality $c_m^t$ related to the local iterations $l_m^t$ in the object function as well as the bandwidth $B_m^t$ in the constraints, make the problem difficult to solve. To convexify them and obtain the global optimal solution of problem \eqref{WD}, we need to reformulate the coupled terms into addition forms, followed by the branch and bound method to deal with the issue of discrete variables. 

According to \cite{tan2021robust}, the following proposition is proposed: Assume that $a > 0$ is a positive real number. Then the bilinear set $\mathcal{B} = \{(x, y, w) \in \mathbb{R}_+ \times  {\mathbb{Z}_+}\times  \mathbb{R} : w = xy, x \leq a, y \leq 1\}$ is equivalent to the linear set $\mathcal{M} = \{(x, y, w)\in \mathbb{R}\times  {\mathbb{Z}} \times  \mathbb{R}:w \geq 0, w \geq x+ay-a, w\leq ay, w \leq x\}$. This proposition establishes an equivalence between a bilinear set, where variables are in the form of multiplication, and a linear set, where variables are in the form of addition. 
Before using this proposition, problem \eqref{WD} needs to be reformulated as follows
\begin{subequations}
    \label{WD_1}
    \begin{align}
        \max_{\{\bm{q}^t, \bm{B}^t, \bm{l}^t,\bm{\chi}^t, \bm{\varphi}^t, \bm{\gamma}^t\}}~~&
        \label{main03}{\sum_{m \in \mathcal{M}}{\varphi}_{m}^t\left(\check{c}_m^t- e_\text{1}{{\varphi}_{m}^t}^2 \right) - e_\text{2}{\gamma}_m^{t}} \\
        ~{\rm s.t.}&~{\rm constraints ~in~}\eqref{task restrain 2}, \nonumber\\
        &~\label{030}q_{m}^t\in\{0,\ 1\},\forall m\in \mathcal{M},\\
        &~\label{031}l_{m}^t\in \{ 0,\dots, l_\text{max}\},\forall m\in \mathcal{M},\\
        &~\label{032}{\varphi}_{m}^t={{q}_{m}^t}l_m^{t},\forall m\in \mathcal{M},\\
        &~\label{033}{\varsigma}_{m}^t={{q}_{m}^t}B_m^{t},\forall m\in \mathcal{M},\\
        &~\label{034}{\gamma}_{m}^t={{q}_{m}^t}{\chi}_m^{t},\forall m\in \mathcal{M},\\
        &\label{035}\sum_{m\in M}{\varsigma}_{m}^t \leq B_\text{max},\\
        &~\label{036}{\chi}_m^{t} \geq (2^{{s^t}/{B_m^t}}-1)B_m^t,
    \end{align}
\end{subequations}
where $\check{c}_m^t= \sigma\lambda_m^t \sum\nolimits_{z=1}^{Z}  u_{z, m}^t d_{z,m} $, $e_\text{1}={\zeta\left(a_mD_m\right)^3}/{{t_{f}}^2}$, and $e_\text{2}={\varrho}/\left({h_m^t s^t}\right)$. Since $q_m^t$ is a binary variable, we have $q_m^t={q_m^t}^2$. Therefore, $q_m^t{l_m^t}^3={q_m^t}^3{l_m^t}^3$. 

 {At this point, since constraints \eqref{032} to \eqref{034} are bi-convex, the problem \eqref{WD_1} is not a convex problem.
By applying this proposition to \eqref{032}-\eqref{035}, problem \eqref{WD_1} can be converted into a convex problem. For this purpose, we assume that ${\gamma}_{m}^t$, ${\varsigma}_{m}^t$ and ${\varphi}_{m}^t$ are viewed as $``w"$, and ${\chi}_m^{t}$, $B_m^{t}$ and $l_m^{t}$ are viewed as $``x"$ when ${q}_{m}^t$ was viewed as $``y"$. Then we can equivalently transform \eqref{032}-\eqref{035} into linear constraints as}
\begin{equation}
    \label{cons1}
    \begin{split}
        \bm{A}_\text{1} [{\gamma}_{m}^t, {\chi}_m^{t}, q_{m}^t, 1 ]^T \preceq \bm{0},\\
        \bm{A}_\text{2} [{\varsigma}_{m}^t, B_m^{t}, q_{m}^t, 1 ]^T \preceq \bm{0},\\
        \bm{A}_\text{3} [{\varphi}_{m}^t, l_m^{t}, q_{m}^t, 1 ]^T \preceq \bm{0},
    \end{split}
\end{equation}
 {where the parameter matrices are}
 \vspace{-0.0ex}
$$
~\bm{A}_\text{1}= \begin{bmatrix}
       1, &  0, &  -l_\text{max}, &  0 \\
       -1, &  0, &  0, &  0 \\
       1, & -1, &  0, &  0 \\
       -1, &  1, &  l_\text{max}, &  -l_\text{max} 
\end{bmatrix},
$$ 
$$
~\bm{A}_\text{2}= \begin{bmatrix}
       1, &  0, &  -B_\text{max}, &  0 \\
       -1, &  0, &  0, &  0 \\
       1, & -1, &  0, &  0 \\
       -1, &  1, &  B_\text{max}, &  -B_\text{max} 
\end{bmatrix},
$$
$$
~\bm{A}_\text{3}= \begin{bmatrix}
       1, &  0, &  -\chi_\text{max}, &  0 \\
       -1, &  0, &  0, &  0 \\
       1, & -1, &  0, &  0 \\
       -1, &  1, &  \chi_\text{max}, &  -\chi_\text{max} 
\end{bmatrix}.
$$
% \begin{subequations}
%     \label{WD011}
%     \begin{align}
%         &\label{021}{\gamma}_{m}^t\geq 0, {\varsigma}_{m}^t\geq 0, {\varphi}_{m}^t\geq 0,\forall m\in \mathcal{M},\\
%         &\label{022}{\gamma}_{m}^t \geq {\chi}_{m}^t+q_{m}^t{\chi}_\text{max}-{\chi}_\text{max},\forall m\in \mathcal{M},\\
%         &\label{028}{\varsigma}_{m}^t \geq B_{m}^t+q_{m}^tB_\text{max}-B_\text{max},\forall m\in \mathcal{M},\\
%         &\label{027}{\varphi}_{m}^t \geq l_{m}^t+q_{m}^tl_\text{max}-l_\text{max},\forall m\in \mathcal{M},\\
%         &\label{023}{\gamma}_{m}^t \leq q_{m}^t {\chi}_\text{max}, {\varsigma}_{m}^t \leq q_{m}^t B_\text{max} ,{\varphi}_{m}^t \leq q_{m}^t l_\text{max} ,\forall m\in \mathcal{M},\\
%         &\label{024}{\gamma}_{m}^t \leq {\chi}_m^t, {\varsigma}_{m}^t \leq B_m^t, {\varphi}_{m}^t \leq l_m^t,\forall m\in \mathcal{M}.
%     \end{align}
% \end{subequations}
 {The $\chi_\text{max}$ is the upper bound of the sum of $\chi_m^t$.}

 {In addition, since the variables $l_m^t$ and $q_m^t$ are discrete, the problem \eqref{WD_1} is the non-convex problem. Therefore, we need to relax two discrete constraints \eqref{030} and \eqref{031} into continuous constraints, as follows}
\begin{subequations}
    \label{consist}
    \begin{align}
        &\label{cs030} 0 \leq q_{m}^t \leq 1,\forall m\in \mathcal{M},\\
        &\label{cs031} 0 \leq l_{m}^t \leq l_\text{max},\forall m\in \mathcal{M}.
    \end{align}
\end{subequations}
% When $B_m^t$ serves as the independent variable, $\chi_m^t$ exhibits concavity. Consequently, we can perform easy calculations to determine the maximum value of $\chi_m^t$, denoted as $\chi_\text{max}^t$.
On this basis, problem \eqref{WD_1} can be reformulated as
\begin{subequations}
    \label{WD01}
    \begin{align}
        \max_{\{\bm{q}^t, \bm{B}^t, \bm{l}^t,\bm{\chi}^t, \bm{\varphi}^t, \bm{\gamma}^t\} }
        \label{main02}~{\sum_{m \in \mathcal{M}}{\varphi}_{m}^t\left(\check{c}_m^t- e_\text{1}{{\varphi}_{m}^t}^2 \right) - e_\text{2}{\gamma}_m^{t}} \\
        {\rm s.t.}~{\rm constraints ~in~} \eqref{task restrain 2}, \eqref{036},\eqref{cons1}, \eqref{cs030},\eqref{cs031},\nonumber
        % &\label{02}w_{m}^t\in\{0,\ 1\},\forall m\in \mathcal{M}\\
        % &\label{021}{\gamma}_{m}^t\geq 0,\forall m\in \mathcal{M},\\
        % &\label{022}{\gamma}_{m}^t \geq B_{m}^t+w_{n,m}^tB_\text{max}-B_\text{max},\\
        % &\label{023}{\gamma}_{m}^t \leq w_{m}^t B_\text{max},\forall m\in \mathcal{M},\\
        % &\label{024}{\gamma}_{m}^t \leq B_m^t,\forall m\in \mathcal{M},\\
        % &\label{025}{\gamma}_{m}^t={{w}_{m}^t} B_m^t,\forall m\in \mathcal{M},\\
        % &\label{026}{\varphi}_{m}^t\geq 0,\forall m\in \mathcal{M}\\
        % &\label{027}{\varphi}_{m}^t \geq l_{m}^t+w_{    m}^tl_\text{max}-l_\text{max},\\
        % &\label{028}{\varphi}_{m}^t \leq w_{m}^t l_\text{max} ,\forall m\in \mathcal{M}\\
        % &\label{029}{\varphi}_{m}^t \leq l_m^t ,\forall m\in \mathcal{M}\\
        % &\label{0210}{\varphi}_{m}^t={{w}_{m}^t} l_m^t,\forall m\in \mathcal{M}.
    \end{align}
\end{subequations}
There are numerous well-studied deterministic methods for finding the global solutions to the convex problem, such as the interior point method. 
However, problems \eqref{WD} and \eqref{WD01} are not equivalent. To obtain the solution of problem \eqref{WD}, we need to convert the continuous values $q_m^t$ and $l_m^t$ back into discrete values. We can use the branch-and-bound method to determine these discrete values. Upon completing the branch and bound method, we will obtain the optimal winner indicator vector $\bm{q}^t=[{{q}}_1^t, \cdots, {{q}}_M^t]\in\mathbb{Z}^{M}$ and local iterations vector $\bm{l}^t=[{{l}}_1^t, \cdots, {{l}}_M^t]\in\mathbb{Z}^{M}$. In this process, each branch corresponds to a discrete value of $q_m^t$ or $l_m^t$, and the interior point method is used to decide whether to prune the branch. By the final step of determining the last discrete variable in the branch-and-bound method, we can obtain the value of the optimal bandwidth allocation vector $\bm{B}^t=[B_1^t, \dots, B_M^t]\in\mathbb{R}^{M}$.
% Under the branch-and-bound approach, we first treat ${{l}}_m^t$ as continuous to determine the optimal winner indicator vector $\bm{q}^t=[{{q}}_1^t, \ {{q}}_2^t, \cdots, {{q}}_M^t]\in\mathbb{Z}^{M}$. Next, we can determine determine the local iterations vector $\bm{l}^t=[{{l}}_1^t, \ {{l}}_2^t, \cdots, {{l}}_M^t]\in\mathbb{Z}^{M}$. Finally, based on the determinate winner indicator vector $\bm{q}^t$ and local iterations vector $\bm{l}^t$, we can find the optimal bandwidth allocation vector $\bm{B}^t=[B_1^t, \dots, B_M^t]\in\mathbb{R}^{M}$ by interior point method.
Based on the optimal winner indicator vector, the selected MCs will participate in the FL. The training workflow is illustrated in Algorithm 1.
\begin{algorithm}
\caption{The workflow of LCSFLA}\label{alg:cap}
\hspace*{0.02in} {\bf Input:} %算法的输入， \hspace*{0.02in}用来控制位置，同时利用 \\ 进行换行
Set of all MC $\mathcal{M}$, the initial global model $\bm{\omega}^{0}$,
and the amount of Winner $N$.
\\
\hspace*{0.02in} {\bf Output:} %算法的结果输出
The global model $\bm{\omega}^T$
\begin{algorithmic}[1]
% \Require $n \geq 0$
% \Ensure $y = x^n$
% \STATE{The instructions for the algorithm have not been changed}
\STATE {A learning task is submitted to the CS (FL platform);}
\STATE {The MCs that want to participate in FL submit their own local data distribution to the CS;}
    \FOR {$t=0$ to $T$}
    	\STATE {The CS broadcast $\bm{\omega}^t$ and task to all MC that want to participate in FL;}
        \FOR{each MC $m \in \mathcal{M}$}
            \STATE {The MC $m$ computer its energy cost according to the equation. \eqref{cpu_f} - \eqref{energycost};}
            \STATE {The MC $m$ submit cost and channel information to CS;}
        \ENDFOR
        \STATE {The CS calculates the data quality of MCs according to the equation. \eqref{dataquality_2};}
        \STATE{The CS selects a subset $M^t$ of MCs as winners for the current communication round of FL task participation to maximize social welfare in each round according to the \eqref{WD};}
        \FOR{each MC $m \in M^t$}
            \STATE {The MC $m$ trains on local data set $\mathcal{D}_m$;}
            \STATE {The MC $m$ updates model weights $\bm{\omega}^{t-1}_m$ according to the \eqref{local-updata};}
            \STATE {The MC $m$ submit $\bm{\omega}^{t}_m$ to CS; }
        \ENDFOR
    \STATE The CS calculates $\bm{\omega}^{t}$ according to the \eqref{glabel_updata}
    \ENDFOR
\end{algorithmic}
\end{algorithm}

The above solution to the problem constitutes the Social-Welfare-Maximization (SWM) problem. Algorithm 1 summarizes the main steps and pseudocode of the algorithm. The computational complexity of solving the SWM, that is, the branch-and-bound algorithm problem, is $O(M)$. In addition, when using \eqref{pm} to determine the deposit $\kappa_m^t$ of MC $m$, the SWM problem needs to be solved again to obtain the $\hat{{\bm{{q}}}^{t}}$.
To determine the deposits of all participant MCs, the matching problem needs to be solved up to $N$ times. Therefore, the total computational complexity of the socially optimal auction implemented on the CS is $O((1+N)M)$.

\section{SIMULATION RESULTS}
\subsection{Experiment Setting}
\subsubsection{Experiment Environment}
This article established a Python 3.9 software environment based on Pytorch, and the hardware environment is a computer with 2.30 GHz intel Core i7-11800H 8-core Processor CPU, 16.00 GB, and Win10 64 bit, NVIDIA GeForce RTX 3060 SUPER.
\subsubsection{Experimental Datasets and Models}
In the simulation experiments, we adopt four datasets commonly used in other works. The first dataset is CIFAR-10, which includes 50,000 training images and 10,000 test images in 10 categories. The second dataset is FASHION-MNIST, which includes 60,000 training images and 10,000 test images from 10 categories, with 7,000 images per category. The third dataset is MNIST, which is a handwritten character dataset consisting of 10 categories, with 60,000 images. The last dataset is the Traffic Sign Recognition Database (TSRD) \cite{tsrd}, which includes 4,170 images training images and 1,994 test images from 3 categories. In addition, we trained three different nerve models corresponding to four datasets, namely CNN\footnote{The CNN for MNIST has the following structure: 5×1×32 Convolutional → 2×2 MaxPool → 5×1×2 Convolutional → 2×2 MaxPool → Flatten → 3136×512 Fully connected →dropout → 512×10 Fully connected → softmax } for MNIST, 2NN\footnote{The 2NN for Fashion MNIST has the following structure: 716×200 Fully connected→ 200×200 Fully connected→200×10 Fully connected→ softmax} for Fashion MNIST and Resnet18 \cite{he2016deep}  for CIFAR-10 and TSRD.

\subsubsection{Data Distribution Setting}
For data distribution experimental setup, we have set up two different experimental environments: $\mathbf{Case\ 1}$: distributes data with the method of Dirichlet distribution to the MC,  $\alpha$ = 0.3; $\mathbf{Case\ 2}$: 
The MC $m$ initiates the process by uniformly choosing a number $z_m$ from the range of 1 to 3, 1 to 6, or 1 to 9, determining the number of categories within its dataset.
Subsequently, MC $m$ randomly selects $z_m$ categories from the available categories, which comprises ten category labels in the case of the MNIST and Cifar-10 datasets. Next, MC $m$ uniformly chooses a number $d_{z, m}$ from the range of 10 to 200 or 10 to 100, ascertaining the data size of category $z$.
In this scenario, the number of data sizes, denoted as $D_m$ for MC $m$, ranges from 10 to 2000.
\subsubsection{Benchmark Mechanism}

% According to \cite{kang2019incentive}, The energy consumption $E_{com}$ for transmitting a local update is 20.

The LCSFLA will be compared with three benchmark algorithms. 
$\mathbf{1)\ FedAvg}$\cite{mcmahan2017communication}: In each communication round, CS will randomly select $N$ MCs to participate in FL.  {$\mathbf{2)\ CSFedAvg}$\cite{zhang2021client}: In each communication round, CS will randomly select $0 - N$ MCs as the random MC set, and choose one MC whose local model is closest to the global model to add to the candidate MC set. The final selected MC set, i.e., the MCs who will participate in the update of the global model, consists of the candidate MC set and the random MC set. In addition, the maximum size of the candidate MC set is set to $N/2$. $\mathbf{3)\ CAFL}$\cite{lu2023auction}: All MCs that want to participate in FL need to process local training based on the same initial global model. Then, Using K-means clustering all MCs into $N/2$ groups based on the local training results, two MCs with the optimal bids, obtained by the Nash equilibrium, are selected from each group. Finally, we can get $N$ MCs to participate in FL in each communication round.
$\mathbf{4)\ FL-DPS}$\cite{zhang2023dpp}: Before FL begins, local training is required for all MCs wishing to participate. This is followed by the calculation of a similarity matrix based on the Fully Connected layer of the locally trained models. Subsequently, the probability of selecting each client combination of size $N$ is computed using K-DDP \cite{kulesza2011k}. Based on these probabilities, a client combination is selected for FL in each communication round.
$\mathbf{5)\ LCSFLA-bias}$: Then we will compare the performance when the ``no biased" factor $\beta$ changes based on the history train rounds (LCSFLA) and the performance when the ``no biased" factor $\beta$ is set to one (LCSFLA-bias). }
\subsubsection{Simulation Parameters}
The simulation parameters are configured as follows: 
The number of participant MC denoted as $M$, is 100. The number of selected MCs denoted as $N$, is set to 10 for benchmark algorithms. For TSRD, due to less train data size, the $M$ is set to 20, and $N$ is set to 4. The mobile clients were required to finish the local data train within 1 minute. The total number of communication rounds is set to 50 or 100. The basic reward for one local iteration is set to 200. The special super-parameter settings in this paper are as follows: $\alpha = 2$, $\mu=0.2$, $\vartheta=10, \beta = 0.95, \sigma = 1$.
Parameter $a_m$ is 2 megacycles/samples, and the effective capacitance parameter of the computing chipset is $\zeta = 10^{-28}$ \cite{kang2019incentive}. The model size, denoted as $\varrho$, is 8 Mbits, and the transmission rate for uploading the local model, denoted as $s$, is 2 Mbits/s. We assume that the noise power spectral density level ${\psi_0}$ is -130 dBm/Hz, and the dynamic range of the channel power gain $\hat{h}_m^t$ varies from -90 dB to -100 dB \cite{jiao2020toward}. Therefore, we uniformly generate MC $m$'s normalized channel power gain $h_m^t$ within the range of $[10^6, 10^7]$. The total available bandwidth is $B_{\text{max}} = 10$ MHz, and the total energy consumption is constrained to be below 40 W. 
The other parameters of the simulation are referred to \cite{kang2019incentive}.

\begin{figure}[ht]
    \centering
    \includegraphics[width=0.45\textwidth]{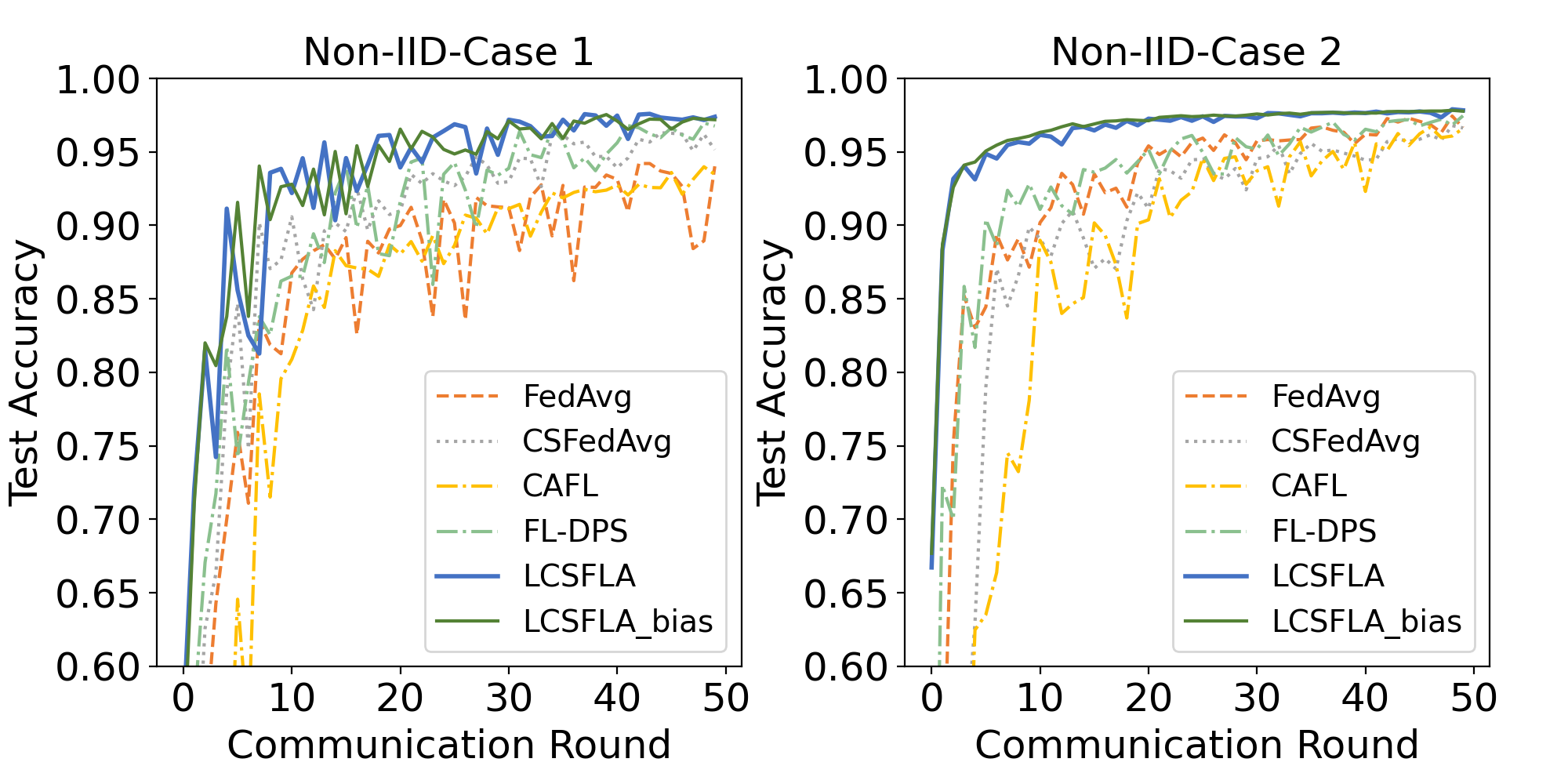}
    \caption{ {The accuracy performance of CNN model on MNIST dataset}}
    \label{mc}
\vspace{-0.5cm}
\end{figure}
\begin{figure}[ht]
    \centering
    \includegraphics[width=0.45\textwidth]{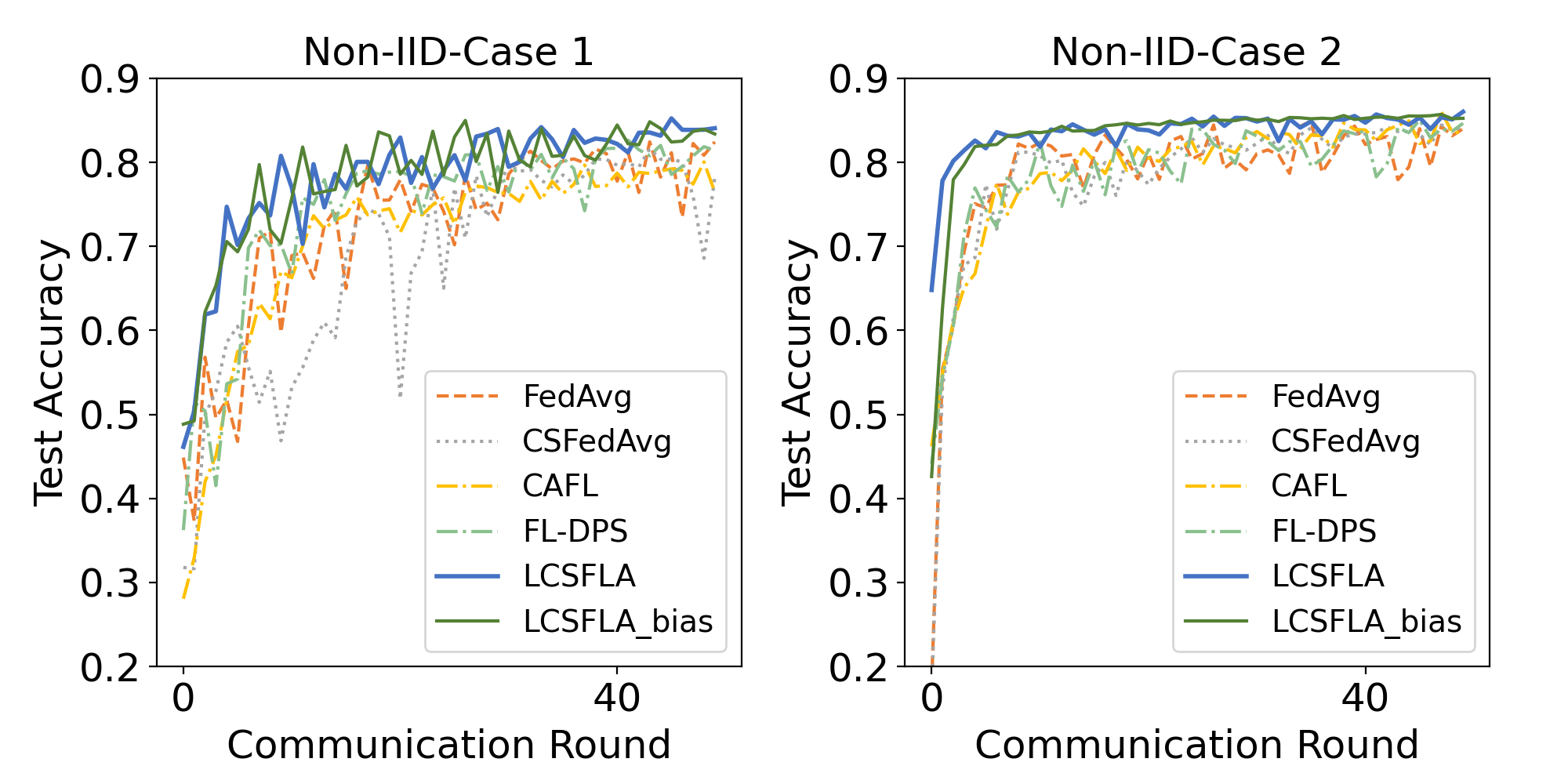}
    \caption{ {The accuracy performance of 2NN model on FASHION MINST dataset}}
    \label{fmc}
    \vspace{-0.5cm}
\end{figure}
\begin{figure}[ht]
    \centering
    \includegraphics[width=0.45\textwidth]{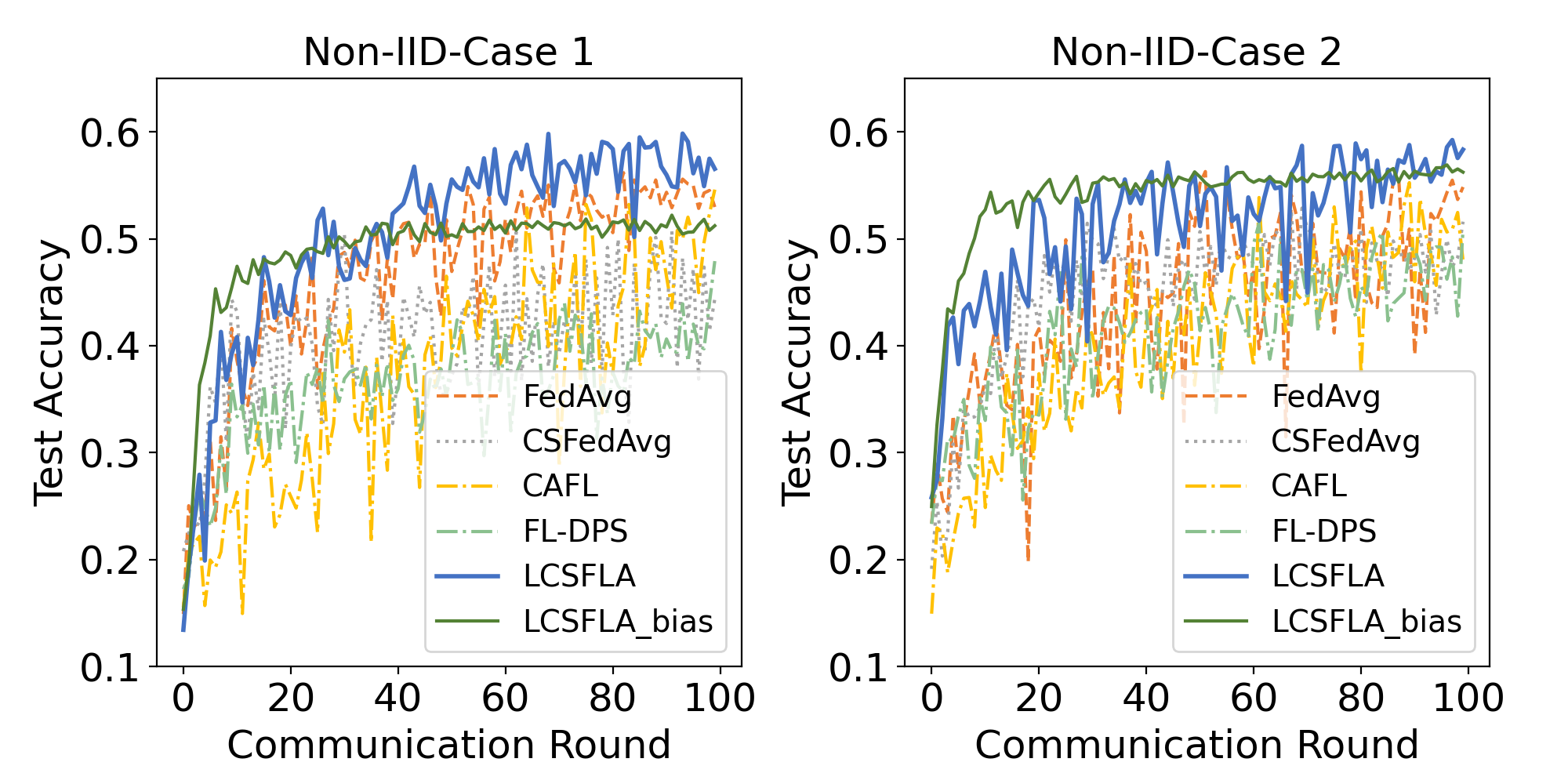}
    \caption{ {The accuracy performance of RESNET18 model on CIFAR10 dataset}}
    \label{cc}
    \vspace{-0.5cm}
\end{figure}
\begin{figure}[ht]
    \centering
    \includegraphics[width=0.45\textwidth]{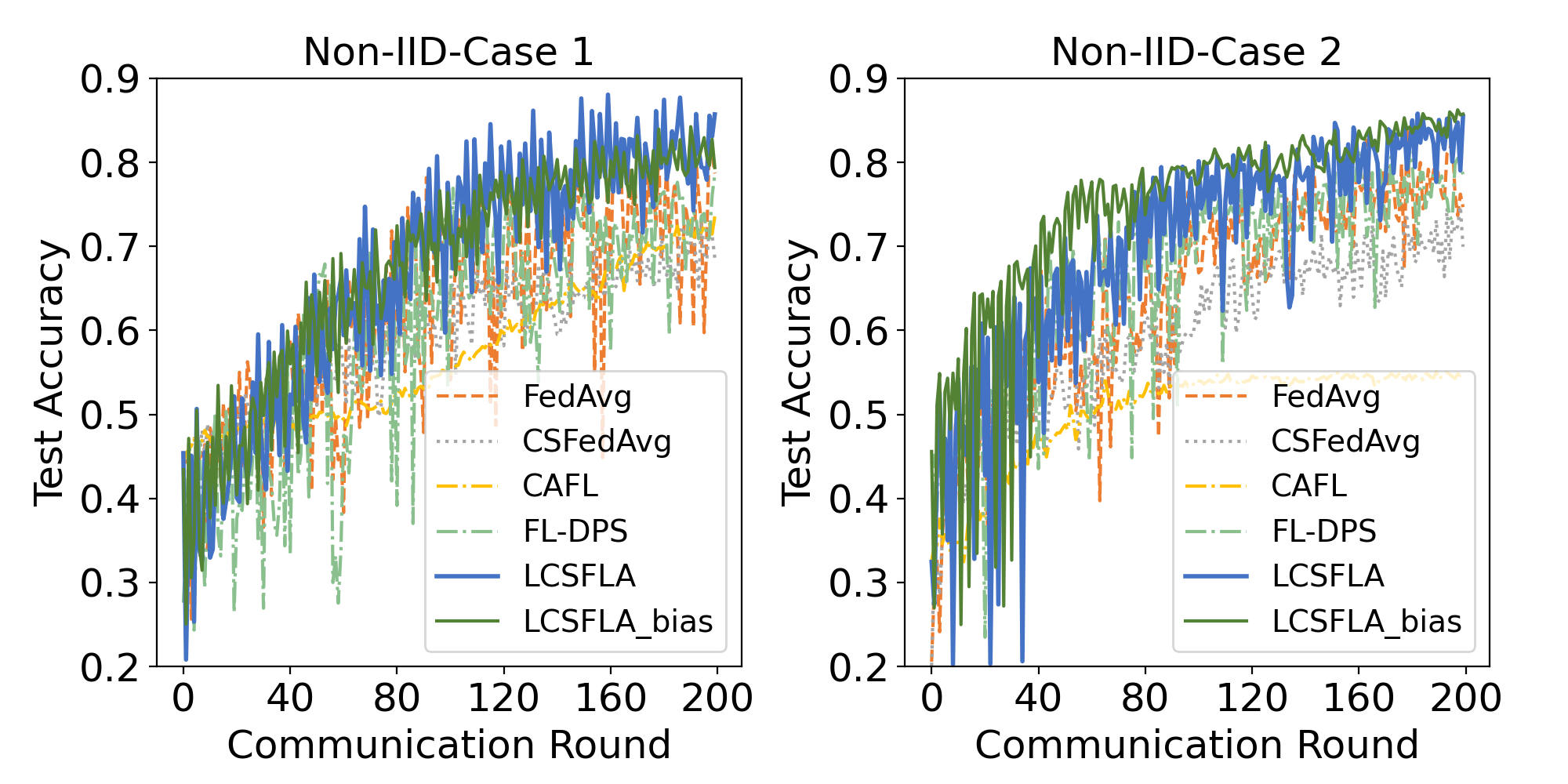}
    \caption{ {The accuracy performance of RESNET18 model on TSRD dataset}}
    \label{tc}
    \vspace{-0.5cm}
\end{figure}
\subsection{Performance of Accuracy Analysis}
Please note that the performance metric in this section is the convergence speed, which is represented by the relationship between test accuracy and communication rounds. However, the comparison scheme does not involve the optimization of communication resources. To ensure the fairness of the experiment, the experiment treats communication consumption as a fixed value. Then, we primarily tested these methods on the MNIST, Fashion-MNIST,  CIFAR-10, and TSRD datasets, under two heterogeneous data distribution settings.
\begin{table*}[htp]
\caption{ {\textbf{PERFORMANCE EVALUATION OF ACCURACY}}}
\centering
\resizebox{\textwidth}{!}{
\begin{tabular}{cccccccccccccc}%四个c代表有四列且内容居中
\toprule%第一道横线
\multirow{2}{*}{Datasets}&\multirow{2}{*}{ToA@x} & \multicolumn{6}{c}{Case 1}&\multicolumn{6}{c}{Case 2} \\%跨两列;内容居中;跨列内容为Resultsummary通过\textbf{}与\underline{}命令分别对内容加粗、加下划线
\cmidrule(r){3-8}
\cmidrule(r){9-14}
&&LCSFLA&LCSFLA-bias&FedAvg&CSFedAvg&CAFL& {FL-DPS}&LCSFLA&LCSFLA-bias&FedAvg&CSFedAvg&CAFL& {FL-DPS} \\
\midrule%第二道横线 
&ToA@90 &\textbf{9} &\textbf{8}   &25 &19 &27 &15
        &\textbf{3} &\textbf{3}   &11 &13 &20 &8      \\%Data1跨两行，自动表格宽度
\cmidrule(r){2-14}%第三道横线 
{MNIST}&ToA@95  &\textbf{19}  &\textbf{21}   &NaN &35  &NaN   &35 
                &\textbf{8}   &\textbf{6}    &25  &36  &42    &23
\\
\cmidrule(r){2-14}%第三道横线 
&Accuracy   &\textbf{0.97218}  &\textbf{0.97026} &0.92376 &0.95525  &0.92905 &0.96362
            &\textbf{0.97689}  &\textbf{0.97744} &0.96827 &0.95747  &0.95621 &0.96958
\\

\midrule%f mnist
&ToA@75 &\textbf{10} &\textbf{11} &18 &30 &27 &16
        &\textbf{3}  &\textbf{3}  &6  &8  &9  &8\\%Data1跨两行，自动表格宽度
\cmidrule(r){2-14}
{FASHION MNIST}&ToA@80  &\textbf{17} &\textbf{19} &32 &39 &NaN &27
                        &\textbf{6}  &\textbf{5}  &9  &9  &15  &18
\\
\cmidrule(r){2-14}
&Accuracy&\textbf{0.83461} &\textbf{0.83363}  &0.79627 &0.78419 &0.78453 &0.81083 
         &\textbf{0.85238} &\textbf{0.85408}  &0.82645 &0.83639 &0.83763 &0.82990
\\

\midrule%Cifar10 
&ToA@40 &\textbf{15} &\textbf{6}  &16  &10 &53 &52
        &\textbf{4}  &\textbf{4}  &20  &16 &30 &23
\\%Data1跨两行，自动表格宽度
\cmidrule(r){2-14}
\multirow{2}{*}{CIFAR10}&ToA@45 &\textbf{16} &\textbf{10} &30 &65 &30 &NaN 
                                &\textbf{16} &\textbf{6}  &22 &52 &30 &48\\
\cmidrule(r){2-14}
&ToA@50 &\textbf{26} &\textbf{35} &41 &NaN &76 &NaN 
        &\textbf{20} &\textbf{9}  &49 &65  &89 &NaN \\
\cmidrule(r){2-14}
&Accuracy   &\textbf{0.56725} &\textbf{0.51130} &0.54475 &0.43848 &0.49447 &0.41555
            &\textbf{0.56891} &\textbf{0.56303} &0.50269 &0.48308 &0.49813 &0.47635\\

\midrule%TSRD
&ToA@70 &\textbf{87}  &\textbf{85}  &95   &187  &183  &82
        &\textbf{62}  &\textbf{42}  &163  &97   &NaN  &63
\\%Data1跨两行，自动表格宽度
\cmidrule(r){2-14}
{TSRD}&ToA@80&\textbf{150}  &\textbf{155} &198  &NaN  &NaN  &NaN
             &\textbf{144}  &\textbf{106} &194  &NaN  &NaN  &NaN
\\
\cmidrule(r){2-14}
&Accuracy   &\textbf{0.81496} &\textbf{0.81334} &0.73506 &0.69915 &0.72202 & 0.73355
            &\textbf{0.83310} &\textbf{0.84530} &0.77553 &0.71023 &0.54659 & 0.77272\\

\bottomrule%第四道横线
\end{tabular}}
\label{table-acc}
\vspace{-0.25cm} 
\end{table*}

Fig. \ref{mc} to Fig. \ref{tc} show the model convergence speed in different datasets using different models. Fig. \ref{mc} displays the convergence speed of the CNN model on the MNIST dataset, Fig. \ref{fmc} displays the convergence speed of the 2NN model on the Fashion MNIST dataset, Fig. \ref{cc} shows the convergence speed of the ResNet18 model on the CIFAR10 dataset while Fig. \ref{tc} shows the convergence speed of the ResNet18 model on the TSRD dataset.
As we observe, both the LCSFLA and LCSFLA-bias algorithms can accelerate the model convergence speed. 

Due to the LCSFLA-bias algorithm setting the unbiased factor $\beta$ to 1, it may perform poorly in the final model performance, especially on the CIFAR10 dataset, where the model accuracy on the test set reaches a bottleneck after 100 communication rounds. This occurs because our auction selection favors MCs with higher contribution evaluation benefits. Without an unbiased factor, MCs with smaller data have little chance of being selected, and the model misses the opportunity to learn new data, which could decrease generalization performance.
We also observed a substantial difference in the performance of CAFL between the two distribution settings. In Case 2 settings, there are only six distinct data distributions, and the clustering algorithm can effectively group the MCs. However, clustering MCs is extremely challenging in Case 1, as the data distribution follows a Dirichlet distribution. After the clustering strategy fails, the model's performance inevitably deteriorates.
However, we can observe that in Case 2, for datasets with a low learning difficulty, since a satisfactory model performance can be achieved without requiring extensive data, the final model performance of LCSFLA-bias is slightly better than that of LCSFLA.

In terms of the number of communication rounds required to achieve the target accuracy, we evaluated the performance of the proposed LCSFLA and LCSFLA-bias algorithms and three baseline algorithms. Similar to \cite{zhang2021client}, we recorded the expected communication rounds to reach the target accuracy, denoted as $ToA\char"40 x$, where $x$ represents the target accuracy. And, we measured the average accuracy in the last ten global iterations, denoted as the final accuracy in Tab.\ref{table-acc}.
From this table, we can see the specific number of communication rounds required to achieve the target accuracy. It is important to note that ``NaN" in Tab.\ref{table-acc} indicates that the target accuracy could not be achieved within a limited number of communication rounds. Specifically, in the CIFAR10 dataset, CAFL cannot achieve 50\% accuracy in the Case 1 setting.

Similarly, energy efficiency aspects were considered to illustrate the effectiveness of the design solution. The effectiveness of the design solution is verified by comparing the unit energy consumption required to achieve a given test accuracy. An efficient client selection algorithm can achieve the same model accuracy with less energy consumption. In practice, this means that we can perform the FL task in a more energy-efficient manner without sacrificing model performance, thus improving the sustainability and energy efficiency of the system.

To show the energy efficiency performance in different client selection schemes, we recorded the sum of energy costs to reach the target accuracy. It is important to note that ``NaN($\bm{y}$)" indicates that the target accuracy could not be achieved within a limited number of communication rounds, where  { $\bm{y}$} represents the sum of energy costs from $1$ to $t$ communication rounds, in Tab.\ref{energy-table}.
The unit is watts (W) in Tab.\ref{energy-table}.
As shown in Tab.\ref{energy-table}, when the final communication round is reached, both LCSFLA and LCSFLA-bias consume the least amount of energy. This indicates that our design selects the most suitable clients for each round, and consequently, LCSFLA achieves optimal energy utilization. 
 {Specifically, in Case 1 of the MNIST dataset, LCSFLA requires only 63\% of the energy consumed by best baselines to achieve 95\% accuracy. In Case 2, LCSFLA requires only 39\% of the energy consumed by best baselines to achieve 95\% accuracy. For the Fashion-MNIST dataset, in Case 1, LCSFLA requires only 68\% of the energy consumed by best baselines to achieve 80\% accuracy.} In Case 2, LCSFLA requires only 32\% of the energy consumed by best baselines to achieve 80\% accuracy. 
For the CIFAR10 dataset, in Case 1, LCSFLA requires only 58\% of the energy consumed by best baselines to achieve 45\% accuracy. In Case 2, LCSFLA requires only 65\% of the energy consumed by best baselines to achieve 45\% accuracy. 
For the TSRD dataset, in Case 1, LCSFLA requires only 90\% of the energy consumed by best baselines to achieve 70\% accuracy.  {In Case 2, LCSFLA requires only 87\% of the energy consumed by best baselines, to achieve 70\% accuracy.}
These results further prove the energy efficiency advantage of our design and also show the advantage of client selection from the perspective of data balance in solving the Non-iid problem.

\subsection{Performance of Alleviating DCD}
% \begin{figure}[htp]
%     \centering
%     \includegraphics[width=0.45\textwidth]{}
%     \caption{The performance of alleviating DCD}
%     \label{DCD1}
% \end{figure}
\begin{figure}
\vspace{-0.5cm}
\centering
    \subfloat{\includegraphics[width=0.20\textwidth]{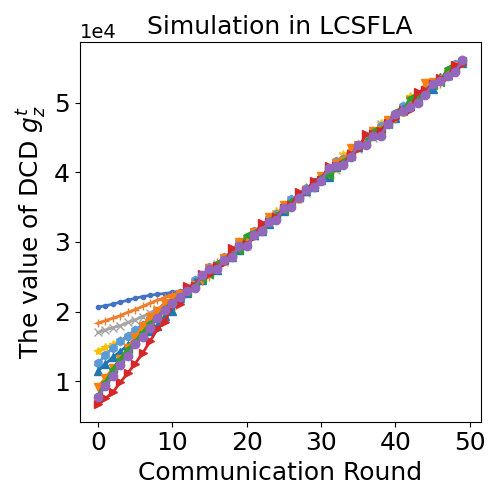}%
    \label{dcd1}}
\hfil
    \subfloat{\includegraphics[width=0.20\textwidth]{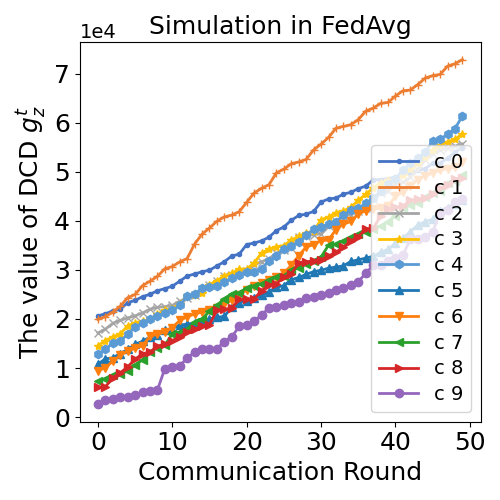}%
    \label{dcd2}}
% \hfil
%     \subfloat{\includegraphics[width=0.20\textwidth]{}%
%     \label{dcd3}}
% \hfil
%     \subfloat{\includegraphics[width=0.20\textwidth]{}%
%     \label{dcd4}}
\caption{The Change of DCD during training in LCSFLA and FedAvg.}
    \label{dcd}
    \vspace{-0.5cm}
\end{figure}

\begin{figure}[htp]
    \centering
    \includegraphics[width=0.45\textwidth]{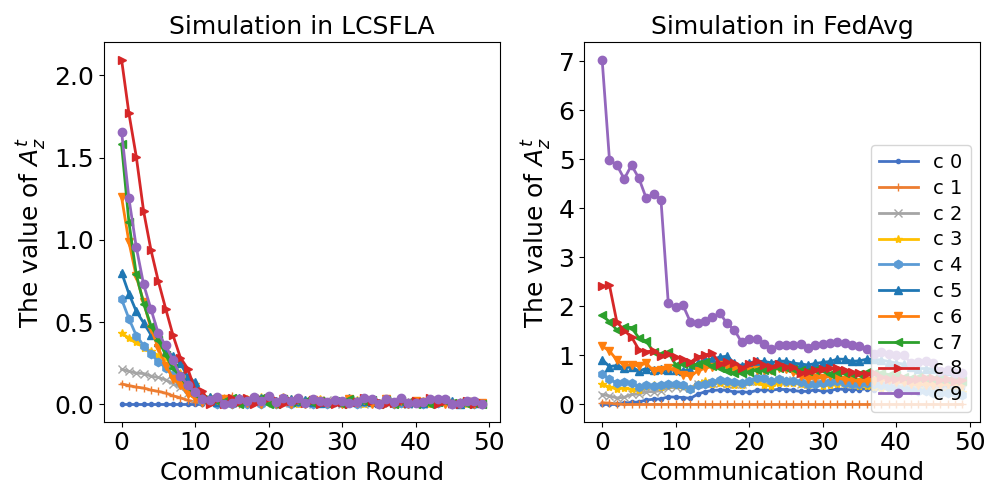}
    \caption{The Change of the average ratio of DCD $A^t$ during training in LCSFLA and FedAvg}
    \label{DCD2}
\end{figure}
To demonstrate the effectiveness of our mechanism in reducing the DCD gap across various categories, we set an initial imbalance at the initial communication round and compare the aggregation of different categories between LCSFLA and FedAvg on the MNIST dataset. Here are some explanations for Fig. \ref{dcd} and Fig. \ref{DCD2}, we can see that LCSFLA can effectively reduce the DCD, while FedAvg causes the model to train on a data distribution that deviates from the overall data distribution, resulting in a divergent outcome. 
The word ``category" is abbreviated to $c$ in the legend of Fig. \ref{dcd} and \ref{DCD2} for visual convenience.
From Fig. \ref{dcd}, the performance difference between the two algorithms on DCD can be seen more simply. The ratio of DCD to trained data size is used to reflect this difference. In Fig. \ref{dcd}, the difference ratio within data categories is calculated by the following formula,
\begin{equation}
\label{a}
  A_z^t = \frac{g_z^t}{o_z^t}.  
\end{equation}
The aggregation changes of data categories under different normalized factors $\sigma$ using LCSFLA are displayed in Fig. \ref{uw_all}. 
To visualize this change, we used the average ratio of DCD $A^t$ as an evaluation metric, in the form of
\begin{equation}
\label{aver-a}
    A^t = \frac{\sum_{z \in \textbf{Z}}A_z^t}{Z}.
\end{equation}
\begin{table*}[ht]
\caption{ {\textbf{Performance Evaluation Of Energy Efficiency}}}
\centering
\resizebox{\textwidth}{!}{
\begin{tabular}{cccccccccccccc}%四个c代表有四列且内容居中
\toprule%第一道横线
\multirow{2}{*}{Datasets}&\multirow{2}{*}{ToA@x} & \multicolumn{6}{c}{Case 1}&\multicolumn{6}{c}{Case 2} \\%跨两列;内容居中;跨列内容为Resultsummary通过\textbf{}与\underline{}命令分别对内容加粗、加下划线
\cmidrule(r){3-8}
\cmidrule(r){9-14}
&&LCSFLA&LCSFLA-bias&FedAvg&CSFedAvg&CAFL& {FL-DPS}&LCSFLA&LCSFLA-bias&FedAvg&CSFedAvg&CAFL& {FL-DPS} \\
\cmidrule(r){1-14}%第二道横线 
%     MNIST
\multirow{2}{*}{MNIST}&ToA@90&\textbf{58} &\textbf{49} &143 &175 &357 &84
                             &\textbf{13} &\textbf{15} &56  &114 &250 &39

\\%Data1跨两行，自动表格宽度
\cmidrule(r){2-14}%第三道横线 
&ToA@95 &\textbf{130} &\textbf{146} &NaN(300) &346 &NaN(688)  &208
        &\textbf{49}  &\textbf{36}  &140      &352 &566       &129
\\

\midrule%f mnist
\multirow{2}{*}{FASHION MNIST}&ToA@75   &\textbf{61} &\textbf{70} &98 &252 &411  &89
                                        &\textbf{7}  &\textbf{15} &32 &59  &120  &40
\\%Data1跨两行，自动表格宽度
\cmidrule(r){2-14}%第三道横线 
&ToA@80 &\textbf{106}  &\textbf{127} &175 &333 &NaN(785) &155
        &\textbf{15}   &\textbf{28}  &47  &70  &201      &100
\\

\midrule%cifar10 
\multirow{2}{*}{CIFAR10}&ToA@45 &\textbf{85} &\textbf{55} &146 &282  &535 &NaN(507)
                                &\textbf{94} &\textbf{30}  &144 &196 &421 &240 
\\%Data1跨两行，自动表格宽度
\cmidrule(r){2-14}%第三道横线 
&ToA@50 &\textbf{141} &\textbf{213} &200 &NaN(974)  &630  &NaN(507)
        &\textbf{107} &\textbf{50}  &240 &639       &741  &NaN(507)
\\

\midrule%tsrd
\multirow{2}{*}{TSRD}&ToA@70 &\textbf{97} &\textbf{94}  &100 &826 &363      &107
                             &\textbf{68} &\textbf{44}  &96  &720 &NaN(396) &78
\\%Data1跨两行，自动表格宽度
\cmidrule(r){2-14}%第三道横线 
&ToA@80  &\textbf{168} &\textbf{166}  &200 &NaN(884) &NaN(397) &NaN(212)
         &\textbf{161} &\textbf{115}  &196 &NaN(881) &NaN(396) &NaN(211)
\\

\bottomrule%第四道横线
\end{tabular}}
\label{energy-table}
\vspace{-0.25cm} 
\end{table*}

% \begin{table}
%     \centering
%     \begin{tabular}{c|c}
%        1  &1 \\
%        \hline%第三道横线 
%         1 &1 \\
%     \end{tabular}
%     \caption{Caption}
%     \label{tab:my_label}
% \end{table}
It is known that the difference degree of DCD gradually decreases as the value of $\sigma$ decreases, as shown in Fig. \ref{uw_all}. This indicates the normalized factor $\sigma$'s physical meaning that the extent and speed of resolving data imbalances. A smaller $\sigma$ represents a stronger willingness of the mechanism to choose MCs that can reduce data category differences. A larger $\sigma$ tends to select MCs with larger local data due to the limitation of energy consumption. The number of data category also influence the DCD. As shown in Fig. \ref{uw_all}, since the dataset only has three categories in the TSRD, achieving balance is less challenging, and therefore the client selection scheme using a larger $\sigma$ can accomplish the same effect compared to other datasets.
% 表明不同数据分布的特点
In addition, the influences of normalized factors $\sigma$ are different when the data distribution varies. The difference in the two cases shows that a lower $\sigma$ can reach the influences of effectively mitigating DCD, i.e. $A^t \leq 0.2$ when the data distribution isn't significant disparities. For example the effect of mitigating DCD when $\sigma=1$ in Case 1 achieves the the effect when $\sigma=2$ in Case 2. 

\begin{figure}[ht]
    \centering
    \includegraphics[width=0.45\textwidth]{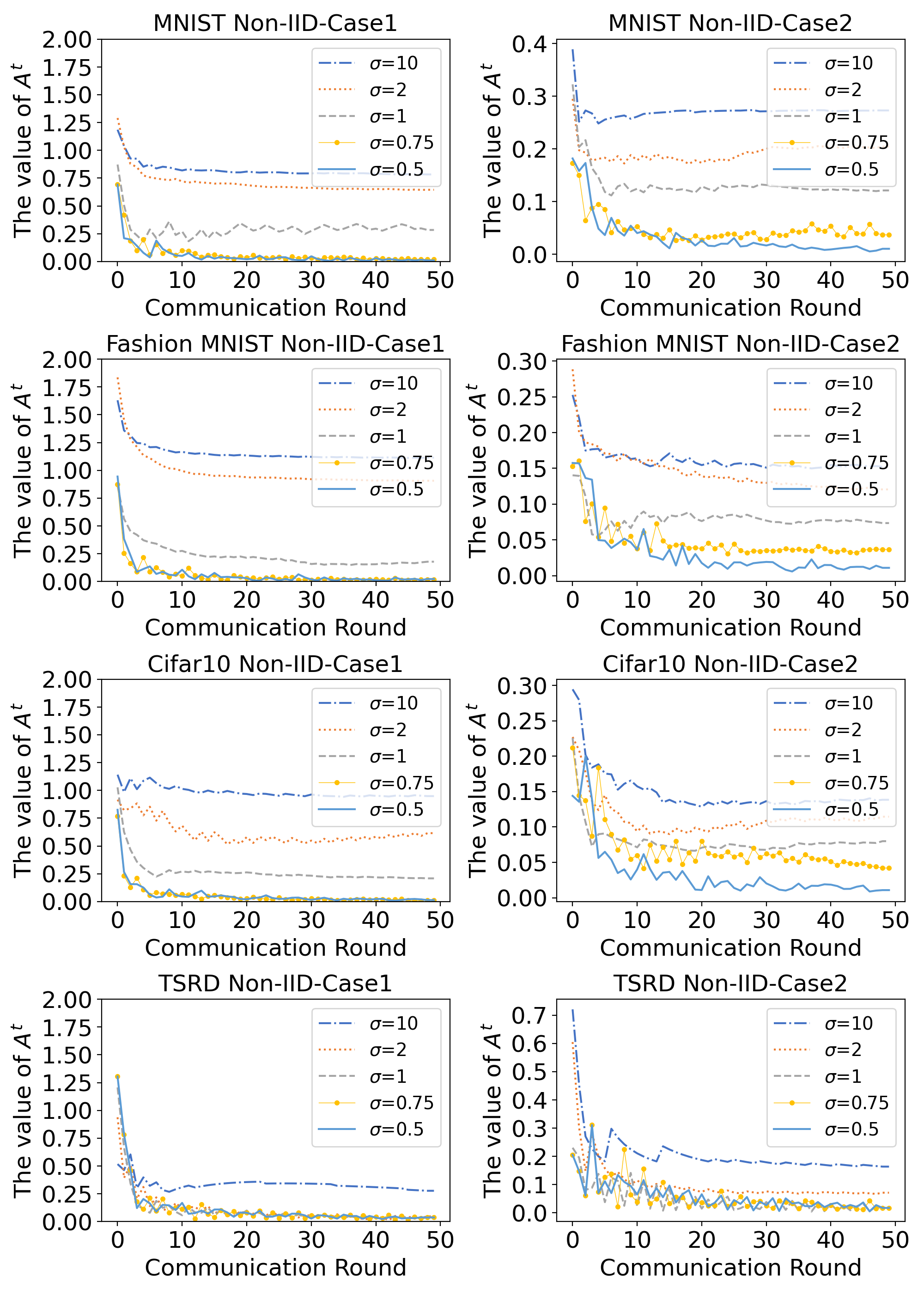}
    \caption{The Changes in DCD with different normalization factors}
    \label{uw_all}
\end{figure}
\subsection{Incentive Mechanism Analysis}
Our main objective is to show that the incentive mechanism we have designed is effective in motivating the CS to attract the MCs to participate in the FL. We then analyze five different algorithms from the perspective of social welfare. Figure \ref{sw} illustrates the changes in social welfare for LCSFLA, LCSFLA-bias, and three benchmark algorithms as the number of MCs varies from 20 to 100 in increments of 20. During the experiment, we did not deduct energy consumption from the estimation of the social welfare contribution of FedAuc in the first communication round. Therefore, we consider the average and overall social welfare from the second to the tenth communication round. The results indicate that our proposed approach can yield significantly higher social welfare than the benchmark algorithms.

\begin{figure}[ht]
\vspace{-0.25cm}
    \centering
    \includegraphics[width=0.45\textwidth]{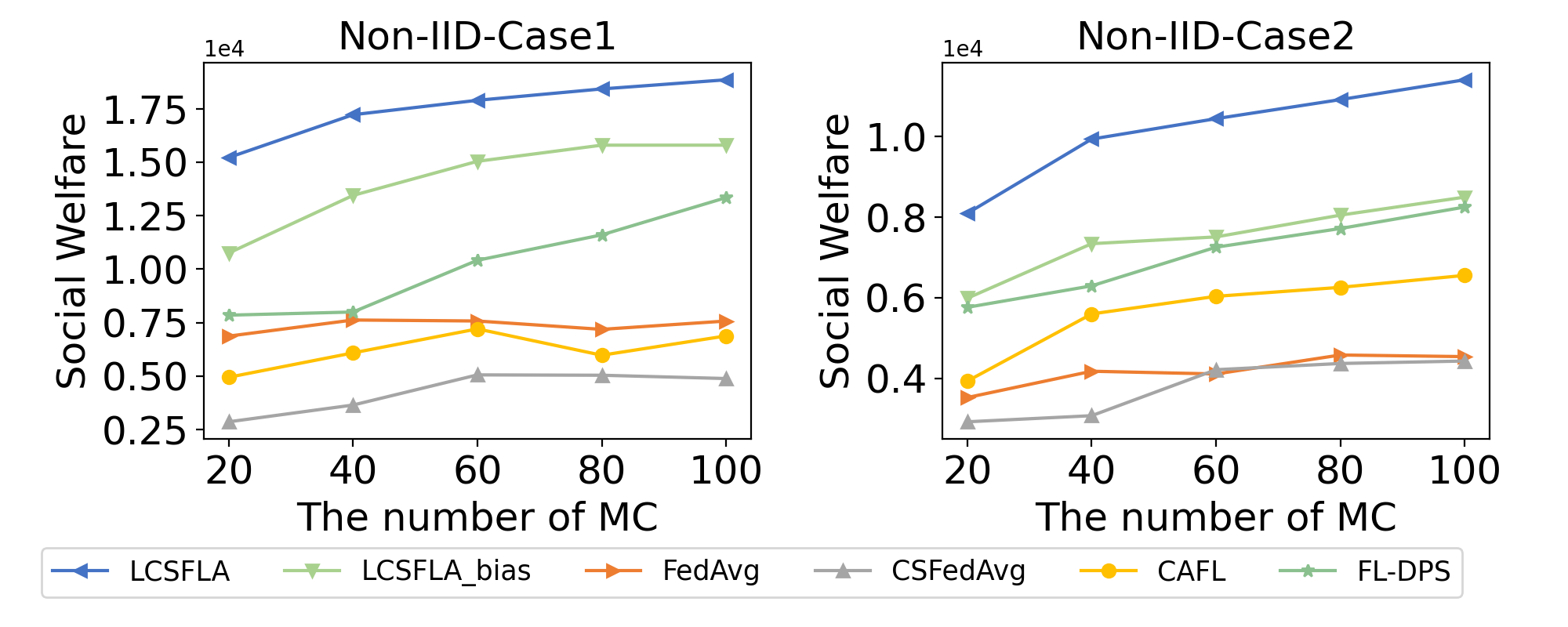}
    \vspace{-0.25cm}
    \caption{ {The changes of social welfare in different numbers of mobile clients}}
    \label{sw}
\end{figure}
 {In Case 1, when the number of MCs increases from 20 to 100, the social welfare of FedAvg, CSFedAvg, CAFL, and FL-DPS are about 39\%-45\%, 19\%-28\%, 32\%-40\%, and 46\%-71\% of the social welfare of LCSFLA, respectively. In Case 2, when the number of users increases from 20 to 100, the social welfare of FedAvg, CSFedAvg, CAFL, and FL-DPS is about 36\%-44\%, 31\%-40\%, 32\%-40\%, and 63\%-72\% of the social welfare of LCSFLA, respectively.} We can see that the social welfare of LCSFLA increases as the number of users increases, which is expected because LCSFLA will only choose some MCs that can make the system have greater social welfare when the number of clients increases. However, other algorithms have too much randomness, and the increase in the average social welfare of clients may be less than that of previous MCs when the number of MCs increases, resulting in a relatively tortuous increase in social welfare. In addition, these algorithms do not motivate CS to recruit more MCs to participate in FL, which is detrimental to the learning process of FL because the global model cannot learn more new data.

\section{Conclusion}
For the first time, we introduce a novel approach to address the challenge of data heterogeneity by selecting clients based on alleviating the discrepancy between data categories, namely LCSFLA. To achieve this objective, we propose a long-term data quality evaluation model that mainly combines the current training status that is the DCD, and the local data distribution of MCs to calculate their data quality. 
Subsequently, we design an auction mechanism to incentivize MCs and overcome information asymmetry in the IoV scenarios while ensuring relevant economic properties, based on the VCG mechanism.
Through the experiment of simulated data, we found that the LCSFLA can obtain more social benefits when the MC continues to join. This mechanism incentivizes the CS to attract more mobile clients to FL, which speeds up model training and final model performance. At the same time, we also found that compared with traditional algorithms such as FedAvg, LCSFLA can achieve the target with less energy consumption and communication rounds while maintaining the same model accuracy, showing good energy efficiency.
In summary, by designing a reasonable client selection method and incentive mechanism, LCSFLA can effectively solve the data heterogeneity and resource allocation problems in FL, thus improving the efficiency and performance of FL model training. This is of great significance to promote the widespread use of FL in the IoV, which is expected to bring new opportunities and challenges to the development of the field of smart vehicles.
% \section{ACKNOWLEDGMENTS}
% This work was supported in part by the 
\appendices
\def\thesection{\Alph{section}}%
\def\thesectiondis{\Alph{section}}%
\section{Proof of The Monotonic Increase of Data Preference Range Width with $\iota_{z}^t$}
\setcounter{equation}{0}
\renewcommand{\theequation}{A.\arabic{equation}}
In the following, we will prove that the width of the data preference range increases as $\iota_{z}^t$ increases. This requires proving that when $\iota_{z,\text{1}}^t \leq \iota_{z,\text{2}}^t$, the distance from the starting point $d_{z, m, \text{1}}^\text{l}$ and endpoint $d_{z, m, \text{1}}^\text{r}$ of the data preference range to $d_{z, m}=l=\iota_{z,\text{1}}^t$ is smaller compared to when $\iota_{z}^t = \iota_{z,\text{2}}^t$. When $u_{z, m}^{t}$ is equal to a value $\epsilon$, we can obtain the starting point and endpoint of the data preference range, as follows
\begin{equation}
    \begin{split}
        \epsilon = \alpha - \epsilon_\text{1} \left(\frac{\nu_z^t d_{z, m}-\iota_{ z}^t}{{\iota_{z}^t}}\right)^2,
    \end{split}
\end{equation}
where $\epsilon_\text{1} = \alpha(1-u^\text{c}_m)$. Due to the $\epsilon$ and $\alpha$ is the fixed value, we can obtain that $a^2 =\left(\alpha - \epsilon\right)/\epsilon_\text{1}$. Therefore, for a $\iota_{z, \text{1}}^t$, we can obtain that 
\begin{equation}
    \begin{split}
        \nu_{z, \text{1}}^{t, \text{l}} d_{z, m, \text{1}}^\text{l}/\iota_{z, \text{1}}^t = 1 - a, \\
        \nu_{z, \text{1}}^{t, \text{r}} d_{z, m, \text{1}}^\text{r}/\iota_{z,\text{1}}^t = 1 + a.
    \end{split}
    \label{sp1}
\end{equation}
For another $\iota_{z, \text{2}}^t$ and $\iota_{z, \text{2}}^t \geq \iota_{z, \text{1}}^t$, we can obtain that
\begin{equation}
    \begin{split}
        \nu_{z, \text{2}}^{t, \text{l}} d_{z, m, \text{2}}^\text{l}/\iota_{z, \text{2}}^t = 1 - a, \\
        \nu_{z, \text{2}}^{t, \text{r}} d_{z, m, \text{2}}^\text{r}/\iota_{z,\text{2}}^t = 1 + a.
    \end{split}
    \label{sp2}
\end{equation}
Suppose $d_{z, m}/\iota_{z}^t = k_{z, m}^{t}$, \eqref{sp1} and \eqref{sp2} can be reformulated as follows
\begin{equation}
    \begin{split}
        \exp(1- k_{z, \text{1}}^{t, \text{l}})k_{z, \text{1}}^{t, \text{l}} = 1 - a, \exp(1- k_{z, \text{2}}^{t, \text{l}})k_{z, \text{2}}^{t, \text{l}} = 1 - a, \\
        \exp(1- k_{z, \text{1}}^{t, \text{r}})k_{z, \text{1}}^{t, \text{r}} = 1 + a, \exp(1- k_{z, \text{2}}^{t, \text{r}})k_{z, \text{2}}^{t, \text{r}} = 1 + a.
    \end{split}
    \label{sp3}
\end{equation}
Due to $\exp{(1-x)}x$ is a monotone function when $x$ is positive value, we can obtain that
\begin{equation}
        k_{z, \text{1}}^{t, \text{l}} = k_{z, \text{2}}^{t, \text{l}},~k_{z, \text{1}}^{t, \text{r}} = k_{z, \text{2}}^{t, \text{r}}.
    \label{sp4}
\end{equation}
It is known that $k_{z, \text{1}}^{t, \text{l}} = k_{z, \text{2}}^{t, \text{l}} \leq 1$ and $k_{z, \text{1}}^{t, \text{r}} = k_{z, \text{2}}^{t, \text{r}} \geq 1$. So, we can obtain that
\begin{equation}
    \begin{split}
        d_{z, m, \text{2}}^\text{l} - d_{z, m, \text{1}}^\text{l} \leq \iota_{z,\text{2}}^t - \iota_{z,\text{1}}^t ,\\
        d_{z, m, \text{2}}^\text{r} - d_{z, m, \text{1}}^\text{r} \geq \iota_{z,\text{2}}^t - \iota_{z,\text{1}}^t .
    \end{split}
    \label{sp5}
\end{equation}
The \eqref{sp5} can be reformulated as follows
\begin{equation}
    \begin{split}
        \iota_{z,\text{2}}^t - d_{z, m, \text{2}}^\text{l} \geq \iota_{z,\text{1}}^t - d_{z, m, \text{1}}^\text{l}  ,\\
        d_{z, m, \text{2}}^\text{r} - \iota_{z,\text{2}}^t  \geq d_{z, m, \text{1}}^\text{r} - \iota_{z,\text{1}}^t .
    \end{split}
    \label{sp6}
\end{equation}
Based on the above, we can know that the distances from the starting point to $\iota_{z}^t$ and from the endpoint to $\iota_{z}^t$ increase as $\iota_{z}^t$ increases.

\section{Proof of The Individual Rationality (IR)}
\setcounter{equation}{0}
\renewcommand{\theequation}{B.\arabic{equation}}
The benefits of participating in FL need to be guaranteed to be non-negative for MC $m$.
\begin{align}
        \!\!\!\!\!\!U_m^t &= {q}_m^t {(r_m^t - E_m^t)}-\kappa_m^t
        \nonumber\\ 
        &= {q}_m^t {(r_m^t - E_m^t)}-\sum_{m\in \mathcal{M}_{-m}}{{\hat{q}}_{m}^{t}} \left( \hat{c}_m^t - \hat{E}_m^t \right)
        \nonumber\\
        &\qquad +\sum_{m\in \mathcal{M}_{-m}}{{q}_m^t} \left( c_m^t-    E_m^t\right)+{{q}_m^t}( c_m^t-  r_m^t)
        \nonumber\\
        &=\sum_{m\in \mathcal{M}_{-m}}{{q}_m^t} \left( c_m^t-    E_m^t\right)+ 
        {{q}_m^t}( c_m^t-   E_m)
        \nonumber\\
        &\qquad-\sum_{m\in \mathcal{M}_{-m}}{{\hat{q}}_{m}^{t}} \left( \hat{c}_m^t - \hat{E}_m^t \right)
        \nonumber\\
        &=\sum_{m\in \mathcal{M}}{{q}_m^t} ( c_m^t- E_m^t)-\!\!\sum_{m\in \mathcal{M}_{-m}}{{\hat{q}}_{m}^{t}} \left(  \hat{c}_m^t - \hat{E}_m^t \right).
\end{align}

From the previous description, ${{\hat{q}}_{m}^{t}}$ is a sub-optimal solution to maximization problem \eqref{WD01}, and we have
\begin{equation}
   \!\!\! U_m^t = \sum_{m\in \mathcal{M}}{{q}_m^t} ( c_m^t-  E_m) \geq \sum_{m\in \mathcal{M}_{-m}}{{\hat{q}}_{m}^{t}} \left( c_m^t -  E_m^t\right).
\end{equation}
Thus, $U_m^t \geq 0$.

\section{Proof of The Incentive Compatibility (IC)}
\setcounter{equation}{0}
\renewcommand{\theequation}{C.\arabic{equation}}
The auction mechanism ensures that the given benefit that the mobile client uploaded truthful bidding information is high bound of the benefit that the mobile client uploaded false bidding information. To this end, we need to ensure $U^t_m- \widetilde{U}^t_{m} \geq 0$, i.e.,
\begin{align}
    \!\!\!U_m^t& - \widetilde{U}_m^t\nonumber\\
        &= {q}_m^t (r_m^t - E_m^t) - \kappa_m^t - \widetilde{q}_m^t (\widetilde{r}_m^t - \widetilde{E}_m^t) + \widetilde{\kappa}_m^t \nonumber\\
        % &= {q}_m^t (r_m^t - E_m^t) - \widetilde{q}_m^t (\widetilde{r}_m^t - \widetilde{E}_m^t) + \widetilde{\kappa}_m^t - \kappa_m^t \\
        &= {q}_m^t (r_m^t - E_m^t) - \widetilde{q}_m^t (\widetilde{r}_m^t - \widetilde{E}_m^t) \nonumber\\
        &\quad + \sum_{m \in \mathcal{M}_{-m}} \hat{q}_m^t (\hat{c}_m^t - \hat{E}_m^t) - \sum_{m \in \mathcal{M}_{-m}} \widetilde{q}_m^t (\widetilde{r}_m^t - \widetilde{E}_m^t) \nonumber\\
        &\quad - \widetilde{q}_m^t (\widetilde{c}_m^t - \widetilde{r}_m^t) + {q}_m^t (c_m^t - r_m^t) \nonumber\\
        &\quad - \sum_{m \in \mathcal{M}_{-m}} \hat{q}_m^t (\hat{c}_m^t - \hat{E}_m^t) + \sum_{m \in \mathcal{M}_{-m}} {q}_m^t (c_m^t - E_m^t) \nonumber\\
        &= {q}_m^t (r_m^t - E_m^t) + \sum_{m \in \mathcal{M}_{-m}} {q}_m^t (c_m^t - E_m^t) \nonumber\\
        &\quad + {q}_m^t (c_m^t - r_m^t) - \widetilde{q}_m^t (\widetilde{r}_m^t - \widetilde{E}_m^t) \nonumber\\
        &\quad - \sum_{m \in \mathcal{M}_{-m}} \widetilde{q}_m^t (\widetilde{r}_m^t - \widetilde{E}_m^t) - \widetilde{q}_m^t (\widetilde{c}_m^t - \widetilde{r}_m^t) \nonumber\\
        &= \sum_{m \in \mathcal{M}_{-m}} {q}_m^t (c_m^t - E_m^t) + {q}_m^t (c_m^t - E_m^t) \nonumber\\
        &\quad - \sum_{m \in \mathcal{M}_{-m}} \widetilde{q}_m^t (\widetilde{r}_m^t - \widetilde{E}_m^t) - \widetilde{q}_m^t (\widetilde{r}_m^t - \widetilde{E}_m^t) \nonumber\\
        &= \sum_{m \in \mathcal{M}} {q}_m^t (c_m^t - E_m^t) - \sum_{m \in \mathcal{M}} \widetilde{q}_m^t (\widetilde{r}_m^t - \widetilde{E}_m^t).
\end{align}
It can be known that ${\widetilde{q}_{m}^t} $ is the optimal solution for when MC $m$ uploads untruthful bidding. However, ${\widetilde{q}_{m}^t}$ is the sub-optimal solution when the winner indicator changes and CS can obtain the truthful information. So, we can know that $U^t_m = \widetilde{U}^t_{m}$, when $ {{q}_m^t}={\widetilde{q}_{m}^t}$, and $U^t_m > \widetilde{U}^t_{m}$, when $ {{q}_m^t} \neq {\widetilde{q}_{m}^t}$. Based on the above-mentioned, we can conclude, i.e.,
\begin{equation}
    U^t_m \geq \widetilde{U}^t_{m}.
\end{equation}
\bibliography{reference.bib}

% Generated by IEEEtran.bst, version: 1.14 (2015/08/26)
\begin{thebibliography}{10}
\providecommand{\url}[1]{#1}
\csname url@samestyle\endcsname
\providecommand{\newblock}{\relax}
\providecommand{\bibinfo}[2]{#2}
\providecommand{\BIBentrySTDinterwordspacing}{\spaceskip=0pt\relax}
\providecommand{\BIBentryALTinterwordstretchfactor}{4}
\providecommand{\BIBentryALTinterwordspacing}{\spaceskip=\fontdimen2\font plus
\BIBentryALTinterwordstretchfactor\fontdimen3\font minus \fontdimen4\font\relax}
\providecommand{\BIBforeignlanguage}[2]{{%
\expandafter\ifx\csname l@#1\endcsname\relax
\typeout{** WARNING: IEEEtran.bst: No hyphenation pattern has been}%
\typeout{** loaded for the language `#1'. Using the pattern for}%
\typeout{** the default language instead.}%
\else
\language=\csname l@#1\endcsname
\fi
#2}}
\providecommand{\BIBdecl}{\relax}
\BIBdecl

\bibitem{r1}
S.~R. Pandey, N.~H. Tran, M.~Bennis, Y.~K. Tun, Z.~Han, and C.~S. Hong, ``{Incentivize to Build: A Crowdsourcing Framework for Federated Learning},'' in \emph{{Proc. IEEE Global Commun. Conf. (GLOBECOM)}}, Waikoloa, United States, Dec. 2019, pp. 1--6.

\bibitem{r2}
Q.~Yang, Y.~Liu, T.~Chen, and Y.~Tong, ``{Federated Machine Learning: Concept and Applications},'' \emph{{ACM Trans. Intell. Syst. Technol.}}, vol.~10, no.~2, pp. 1--19, Jan. 2019.

\bibitem{zhu2019multi}
H.~Zhu and Y.~Jin, ``{Multi-Objective Evolutionary Federated Learning},'' \emph{{IEEE Trans. Neural Netw. Learn. Syst.}}, vol.~31, no.~4, pp. 1310--1322, Jun. 2019.

\bibitem{wang2019edge}
X.~Wang, Y.~Han, C.~Wang, Q.~Zhao, X.~Chen, and M.~Chen, ``{In-Edge {AI}: Intelligentizing Mobile Edge Computing, Caching and Communication by Federated Learning},'' \emph{{IEEE Netw.}}, vol.~33, no.~5, pp. 156--165, Jul. 2019.

\bibitem{8771220}
J.~Tan, Q.~Zhang, T.~Q.~S. Quek, and H.~Shin, ``{{R}obust {E}nergy {E}fficiency {M}aximization in {M}ulticast {D}ownlink {C-RAN}},'' \emph{IEEE Trans. Veh. Technol.}, vol.~68, no.~9, pp. 8951--8965, Jul. 2019.

\bibitem{10102331}
J.~Chen, J.~Tang, and W.~Li, ``{I}ndustrial {E}dge {I}ntelligence: {F}ederated-{M}eta {L}earning {F}ramework for {F}ew-{S}hot {F}ault {D}iagnosis,'' \emph{IEEE Trans. Netw. Sci. Eng.}, vol.~10, no.~6, pp. 3561--3573, Apr. 2023.

\bibitem{sun2024}
K.~Sun, H.~Xu, K.~Hua, X.~Lin, G.~Li, T.~Jiang, and J.~Li, ``{Joint Top-$K$ Sparsification and Shuffle Model for Communication-Privacy-Accuracy Tradeoffs in Federated-Learning-Based IoV},'' \emph{{IEEE Internet Things J.}}, vol.~11, no.~11, pp. 19\,721--19\,735, Mar. 2024.

\bibitem{alalwany2024security}
L.~Xing, P.~Zhao, J.~Gao, H.~Wu, and H.~Ma, ``{A Survey of the Social Internet of Vehicles: Secure Data Issues, Solutions, and Federated Learning},'' \emph{{IEEE Intell. Transp. Syst. Mag.}}, vol.~15, no.~2, pp. 70--84, Aug. 2023.

\bibitem{korba2024}
A.~A. Korba, A.~Boualouache, and Y.~Ghamri-Doudane, ``{Zero-X: A Blockchain-Enabled Open-Set Federated Learning Framework for Zero-Day Attack Detection in IoV},'' \emph{{IEEE Trans. Veh. Technol., Early Access}}, pp. 1--16, Apr. 2024.

\bibitem{mcmahan2017communication}
B.~McMahan, E.~Moore, D.~Ramage, S.~Hampson, and B.~A. y~Arcas, ``{Communication-Efficient Learning of Deep Networks from Decentralized Data},'' in \emph{{Proc. Artificial Intelligence and Statistics (AISTAT)}}, Fort Lauderdale, USA, Apr. 2017, pp. 1273--1282.

\bibitem{zhao2018federated}
Y.~Zhao, M.~Li, L.~Lai, N.~Suda, D.~Civin, and V.~Chandra, ``{Federated Learning with non-{IID} Data},'' \emph{{arXiv Preprint arXiv:1806.00582}}, 2018.

\bibitem{zhu2021federated}
H.~Zhu, J.~Xu, S.~Liu, and Y.~Jin, ``{Federated Learning on non-{IID} Data: A Survey},'' \emph{{Neurocomputing}}, vol. 465, pp. 371--390, Nov. 2021.

\bibitem{lu2023auction}
R.~Lu, W.~Zhang, Y.~Wang, Q.~Li, X.~Zhong, H.~Yang, and D.~Wang, ``{Auction-Based Cluster Federated Learning in Mobile Edge Computing Systems},'' \emph{{IEEE Trans. Parallel Distrib. Syst.}}, vol.~34, no.~4, pp. 1145--1158, Jan. 2023.

\bibitem{tian2022wscc}
P.~Tian, W.~Liao, W.~Yu, and E.~Blasch, ``{WSCC}: {A} {W}eight-{S}imilarity-{B}ased {C}lient {C}lustering {A}pproach for non-{IID} {F}ederated {L}earning,'' \emph{IEEE Internet Things J.}, vol.~9, no.~20, pp. 20\,243--20\,256, Mar. 2022.

\bibitem{deng2021auction}
Y.~Deng, F.~Lyu, J.~Ren, H.~Wu, Y.~Zhou, Y.~Zhang, and X.~Shen, ``{{A}uction: {A}utomated and {Q}uality-Aware {C}lient {S}election {F}ramework for {E}fficient {F}ederated {L}earning},'' \emph{IEEE Trans. Parallel Distrib. Syst.}, vol.~33, no.~8, pp. 1996--2009, Dec. 2021.

\bibitem{zhang2021client}
W.~Zhang, X.~Wang, P.~Zhou, W.~Wu, and X.~Zhang, ``{Client Selection for Federated Learning with non-{IID} Data in Mobile Edge Computing},'' \emph{{IEEE Access}}, vol.~9, no.~1, pp. 24\,462--24\,474, Feb. 2021.

\bibitem{zhang2023dpp}
Y.~Zhang, C.~Xu, H.~H. Yang, X.~Wang, and T.~Q. Quek, ``{{DPP}-based {C}lient {S}election for {F}ederated {L}earning with non-{IID} {D}ata},'' in \emph{Proc. IEEE Int. Conf. on Acoust. Speech Signal Process. Proc. (ICASSP)}.\hskip 1em plus 0.5em minus 0.4em\relax Rhodes Island, Greece: IEEE, Jun. 2023, pp. 1--5.

\bibitem{cho2022towards}
Y.~J. Cho, J.~Wang, and G.~Joshi, ``{{T}owards {U}nderstanding {B}iased {C}lient {S}election in {F}ederated {L}earning},'' in \emph{Proc. Int. Conf. on Artificial Intelligence and Statistics {(AISTATS)}}, Valencia, Spain, Mar. 2022, pp. 10\,351--10\,375.

\bibitem{luping2019cmfl}
W.~Luping, W.~Wei, and L.~Bo, ``{CMFL: Mitigating Communication Overhead for Federated Learning},'' in \emph{Proc. IEEE Int. Conf. on Distrib. Comput. Syst. (ICDCS)}, Dallas, TX, USA, Jul. 2019.

\bibitem{cao2022birds}
H.~Cao, Q.~Pan, Y.~Zhu, and J.~Liu, ``{{B}irds of a {F}eather {H}elp: {C}ontext-aware {C}lient {S}election for {F}ederated {L}earning},'' in \emph{Proc. Int. Workshop Trust. Verif. Audit. Federated Learn. Conjunct. AAAI}, Vancouver, Canada, Feb. 2022, pp. 1--8.

\bibitem{marnissi2024client}
O.~Marnissi, H.~E. Hammouti, and E.~H. Bergou, ``{{C}lient {S}election in {F}ederated {L}earning based on {G}radients {I}mportance},'' in \emph{Proc. AIP Int. Conf. on Model. Simul. Appl. Optim.}, Marrakesh, Morocco, Apr. 2023, pp. 1--6.

\bibitem{huang2022stochastic}
T.~Huang, W.~Lin, L.~Shen, K.~Li, and A.~Y. Zomaya, ``{Stochastic Client Selection for Federated Learning with Volatile Clients},'' \emph{IEEE Internet Things J.}, vol.~9, no.~20, pp. 20\,055--20\,070, May 2022.

\bibitem{10570525}
A.~Mahmod, P.~Pace, and A.~Iera, ``{{SDN}-Assisted {C}lient {S}election to {E}nhance the {Q}uality of {F}ederated {L}earning {P}rocesses},'' in \emph{Proc. IEEE Wireless Commun. and Net. Conf. (WCNC)}, Dubai, United Arab Emirates, Apr. 2024, pp. 1--6.

\bibitem{liu2021privacy}
X.~Liu, H.~Li, G.~Xu, Z.~Chen, X.~Huang, and R.~Lu, ``{{P}rivacy-{E}nhanced {F}ederated {L}earning {A}gainst {P}oisoning {A}dversaries},'' \emph{IEEE Trans. Inf. Forensics Secur.}, vol.~16, no.~1, pp. 4574--4588, Aug. 2021.

\bibitem{khan2020federated}
L.~U. Khan, S.~R. Pandey, N.~H. Tran, W.~Saad, Z.~Han, M.~N. Nguyen, and C.~S. Hong, ``{Federated Learning for Edge Networks: Resource Optimization and Incentive Mechanism},'' \emph{IEEE Commun. Mag.}, vol.~58, no.~10, pp. 88--93, Nov. 2020.

\bibitem{li2022contract}
L.~Li, X.~Yu, X.~Cai, X.~He, and Y.~Liu, ``{Contract-Theory-Based Incentive Mechanism for Federated Learning in Health CrowdSensing},'' \emph{{IEEE Internet Things J.}}, vol.~10, no.~5, pp. 4475--4489, Oct. 2023.

\bibitem{hart1988incomplete}
O.~Hart and J.~Moore, ``{I}ncomplete {C}ontracts and {R}enegotiation,'' \emph{Econometrica: J. Econometric Soc.}, vol.~3, no.~1, pp. 755--785, Jul. 1988.

\bibitem{rVCG}
V.~Krishna, \emph{\BIBforeignlanguage{English (US)}{{Auction Theory}}}.\hskip 1em plus 0.5em minus 0.4em\relax United States: Elsevier Inc., 2003.

\bibitem{briggs2020federated}
C.~Briggs, Z.~Fan, and P.~Andras, ``{F}ederated {L}earning with {H}ierarchical {C}lustering of {L}ocal {U}pdates to {I}mprove {T}raining on non-{IID} {D}ata,'' in \emph{Proc. {I}nt. {J}oint {C}onf. on {N}eural {N}et. ({IJCNN})}, Glasgow, UK, Jul. 2020, pp. 1--9.

\bibitem{Dynamic2021guo}
K.~Guo, Z.~Chen, H.~H. Yang, and T.~Q.~S. Quek, ``{Dynamic Scheduling for Heterogeneous Federated Learning in Private 5G Edge Networks},'' \emph{IEEE J. Sel. Top. Signal Process.}, vol.~16, no.~1, pp. 26--40, 2022.

\bibitem{tu2022incentive}
X.~Tu, K.~Zhu, N.~C. Luong, D.~Niyato, Y.~Zhang, and J.~Li, ``{Incentive Mechanisms for Federated Learning: From Economic and Game Theoretic Perspective},'' \emph{IEEE Trans. Cogn. Commun. Netw.}, vol.~8, no.~3, pp. 1566--1593, May 2022.

\bibitem{wang2020principled}
T.~Wang, J.~Rausch, C.~Zhang, R.~Jia, and D.~Song, ``{{A} {P}rincipled {A}pproach to {D}ata {V}aluation for {F}ederated {L}earning},'' \emph{Federated Learn.}, vol. 125, no.~11, pp. 153--167, Nov. 2020.

\bibitem{kang2019incentive}
J.~Kang, Z.~Xiong, D.~Niyato, H.~Yu, Y.-C. Liang, and D.~I. Kim, ``{Incentive Design for Efficient Federated Learning in Mobile Networks: A Contract Theory Approach},'' in \emph{Proc. IEEE VTS Asia Pacific Wireless Communications Symposium (APWCS)}, Singapore, Aug. 2019, pp. 1--5.

\bibitem{weng2019deepchain}
J.~Weng, J.~Weng, J.~Zhang, M.~Li, Y.~Zhang, and W.~Luo, ``{{D}eepChain: {A}uditable and {P}rivacy-{P}reserving {D}eep {L}earning with {B}lockchain-{B}ased {I}ncentive},'' \emph{IEEE Trans. Dependable Secure Comput.}, vol.~18, no.~5, pp. 2438--2455, Nov. 2021.

\bibitem{8422684}
M.~Zeng, Y.~Li, K.~Zhang, M.~Waqas, and D.~Jin, ``{{I}ncentive {M}echanism {D}esign for {C}omputation {O}ffloading in {H}eterogeneous {F}og {C}omputing: {A} {C}ontract-{B}ased {A}pproach},'' in \emph{Proc. IEEE Int. Conf. on Commun. (ICC)}, Kansas City, USA, May 2018, pp. 1--6.

\bibitem{nisan2007algorithmic}
N.~Nisan, T.~Roughgarden, E.~Tardos, and V.~V. Vazirani, \emph{{Algorithmic Game Theory}}.\hskip 1em plus 0.5em minus 0.4em\relax Cambridge University Press, 2007.

\bibitem{he2019truthful}
J.~He, D.~Zhang, Y.~Zhou, and Y.~Zhang, ``{{A} {T}ruthful {O}nline {M}echanism for {C}ollaborative {C}omputation {O}ffloading in {M}obile {E}dge {C}omputing},'' \emph{IEEE Trans. Ind. Informatics}, vol.~16, no.~7, pp. 4832--4841, Dec. 2019.

\bibitem{wu2022sustainable}
L.~Wu, S.~Guo, Y.~Liu, Z.~Hong, Y.~Zhan, and W.~Xu, ``{Sustainable Federated Learning with Long-term Online VCG Auction Mechanism},'' in \emph{Proc. IEEE Int. Conf. on Distrib. Comput. Syst. (ICDCS)}, Bologna, Italy, Jul. 2022, pp. 895--905.

\bibitem{wang2020tackling}
J.~Wang, Q.~Liu, H.~Liang, G.~Joshi, and H.~V. Poor, ``{{T}ackling the {O}bjective {I}nconsistency {P}roblem in {H}eterogeneous {F}ederated {O}ptimization},'' \emph{Adv. Neural Inf. Process. Syst.}, vol.~33, no.~1, pp. 7611--7623, Dec. 2020.

\bibitem{balakrishnan2022diverse}
R.~Balakrishnan, T.~Li, T.~Zhou, N.~Himayat, V.~Smith, and J.~Bilmes, ``{D}iverse {C}lient {S}election for {F}ederated {L}earning via {S}ubmodular {M}aximization,'' in \emph{Proc. Int. Conf. on {L}earn. {R}epresent. ({ICLR})}, Virtual, Apr. 2022, pp. 1--6.

\bibitem{r5}
D.~Ye, X.~Huang, Y.~Wu, and R.~Yu, ``{Incentivizing Semisupervised Vehicular Federated Learning: A Multidimensional Contract Approach With Bounded Rationality},'' \emph{{IEEE Internet Things J.}}, vol.~9, no.~19, pp. 18\,573--18\,588, Mar. 2022.

\bibitem{tan2021robust}
J.~Tan, T.-H. Chang, K.~Guo, and T.~Q.~S. Quek, ``{R}obust {C}omputation {O}ffloading in {F}og {R}adio {A}ccess {N}etwork with {F}ronthaul {C}ompression,'' \emph{IEEE Trans. Wirel. Commun.}, vol.~20, no.~10, pp. 6506--6521, Mar. 2021.

\bibitem{tsrd}
{TSRD Dataset}, ``{Traffic Sign Recognition Dataset},'' \url{https://nlpr.ia.ac.cn/pal/trafficdata/detection.html}, [Accessed: 2024].

\bibitem{he2016deep}
K.~He, X.~Zhang, S.~Ren, and J.~Sun, ``{{D}eep {R}esidual {L}earning for {I}mage {R}ecognition},'' in \emph{Proc. Conf. on Comput. Vision and Pattern Recognit. {(CVPR)}}, Las Vegas, US, Jul. 2016, pp. 770--778.

\bibitem{kulesza2011k}
A.~Kulesza and B.~Taskar, ``{K}-dpps: {F}ixed-size {D}eterminantal {P}oint {P}rocesses,'' in \emph{Proc. Int. Conf. on Machine Learn. (ICML)}, Bellevue, USA, Jun. 2011, pp. 1193--1200.

\bibitem{jiao2020toward}
Y.~Jiao, P.~Wang, D.~Niyato, B.~Lin, and D.~I. Kim, ``{{T}oward an {A}utomated {A}uction {F}ramework for {W}ireless {F}ederated {L}earning {S}ervices {M}arket},'' \emph{IEEE Trans. Mobile Comput.}, vol.~20, no.~10, pp. 3034--3048, May 2020.

\end{thebibliography}

\begin{IEEEbiography}
[{\includegraphics[width=1in,height=1.25in,keepaspectratio]{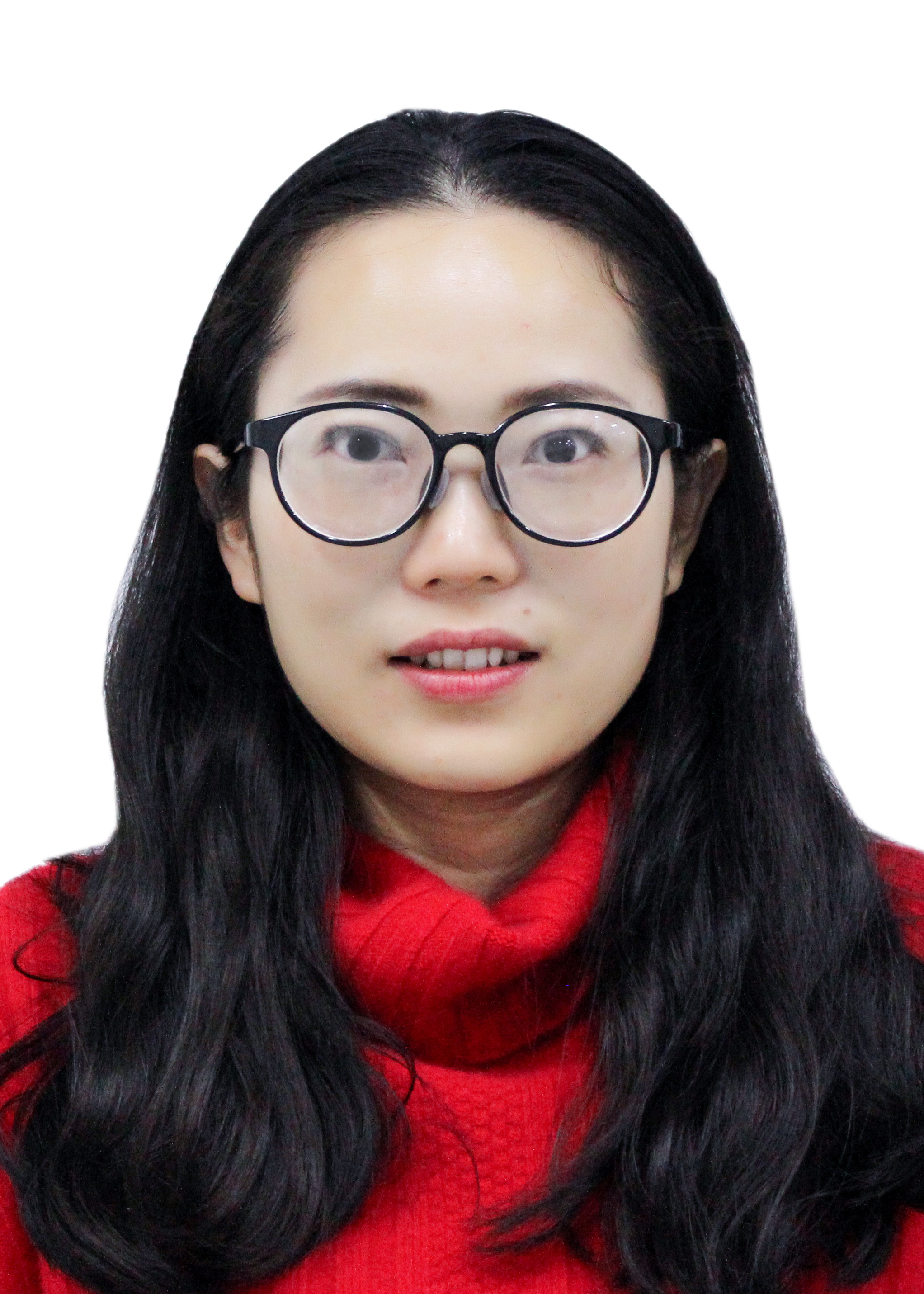}}]
{Jinghong Tan} received her B.E.\ degree in communication engineering from Shandong University, China, in 2014, and her Ph.D.\ degree in electronic engineering from the Singapore University of Technology and Design (SUTD), Singapore, in 2019. She is currently a lecturer at Yunnan University, China.
Her current research interests include cloud/edge radio access networks, edge computing, and machine learning.
\end{IEEEbiography}

\begin{IEEEbiography}
[{\includegraphics[width=1in,height=1.25in, clip, keepaspectratio]{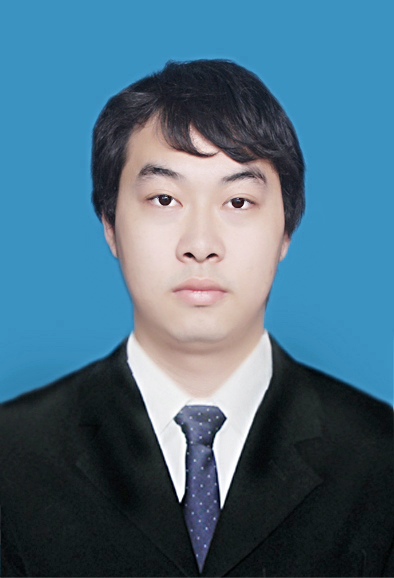}}]{Zhian Liu }(Student Member, IEEE) is currently pursuing the M. Eng. degree in software engineer from Yunnan University, Kunming, China. His research interests include federated learning and incentive mechanisms.
\end{IEEEbiography}

% \begin{IEEEbiography}
% [{\includegraphics[width=1in,height=1.25in, clip, keepaspectratio]{}}]{Hongyu Li}(Student Member, IEEE) is currently pursuing the Master's degree in Electronic Information Engineering at Yunnan University in Kunming, China. His research focuses on federated learning and incentive mechanism design.
% \end{IEEEbiography}

\begin{IEEEbiography}[{\includegraphics[width=1in,height=1.25in,keepaspectratio]{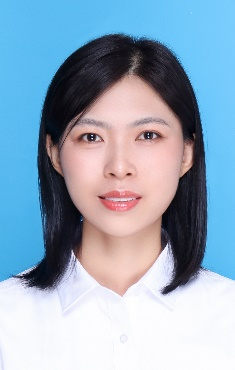}}]
{Kun Guo} (Member, IEEE) received the B.E. degree in Telecommunications Engineering from Xidian University, Xi'an, China, in 2012, where she received the Ph.D. degree in communication and information systems in 2019. From 2019 to 2021, she was a Post-Doctoral Research Fellow with the Singapore University of Technology and Design (SUTD), Singapore. Currently, she is a Research Professor with the School of Communications and Electronics Engineering at East China Normal University, Shanghai, China. Her research interests include wireless edge computing and intelligence, as well as non-terrestrial networks.
\end{IEEEbiography}

\begin{IEEEbiography}
[{\includegraphics[width=1in,height=1.25in,clip,keepaspectratio]{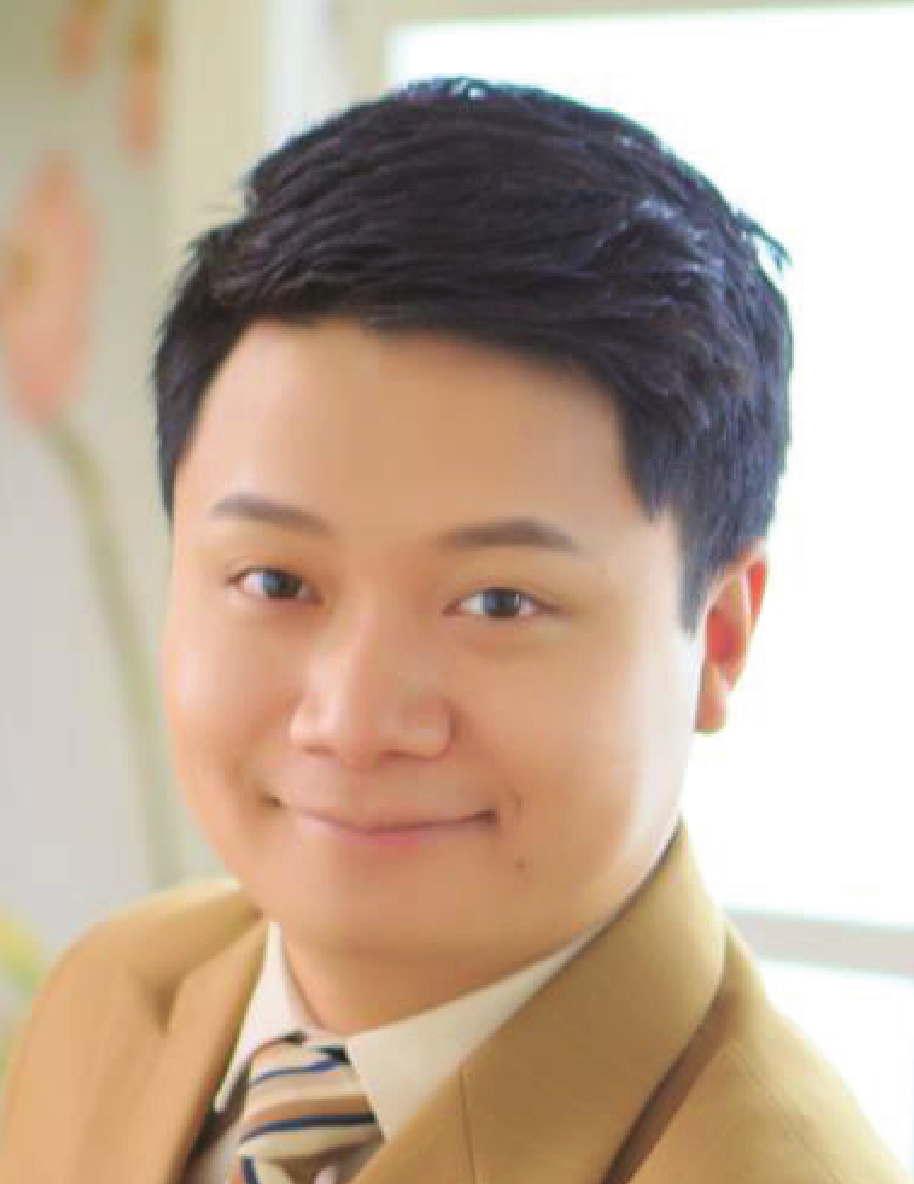}}]
{Mingxiong Zhao} (S'15-M'17) received the B.S. degree in Electrical Engineering and the Ph.D. degree in Information and Communication Engineering from South China University of Technology (SCUT), Guangzhou, China, in 2011 and 2016, respectively. He was a visiting Ph.D. student at University of Minnesota (UMN), Twin Cities, MN, USA, from 2012 to 2013 and Singapore University of Technology and Design (SUTD), Singapore, from 2015 to 2016, respectively. Currently, Dr. Zhao is a Full Professor and the Donglu Young Scholar with Yunnan University (YNU), Kunming, China. He also serves as the Director of the Cybersecurity Department with the National Pilot School of Software, and has been an Outstanding Young Talent of Yunnan Province since 2019. His current research interests include network security, mobile edge computing, and edge AI techniques. 

Dr. Zhao is currently serving as a Youth Editor for the {\scshape Journal of Information and Intelligence}, and a committee member of the Technical Committee on Data Security of the China Communications Society.
\end{IEEEbiography}
\end{document}